%% file: iclr2026_conference.tex
\documentclass{article} 
\usepackage{iclr2026_conference,times}

\input{math_commands.tex}

\usepackage{url}
\usepackage{wrapfig}
\usepackage{xcolor}
\usepackage{mathtools}
\usepackage[page]{appendix}
\usepackage{booktabs}
\usepackage{tabularx}
\usepackage{tcolorbox}
\usepackage{amsmath,amssymb,amsfonts}
\usepackage{subcaption}
\usepackage{enumitem}
\usepackage{tikz}
\usepackage{lipsum}
\usepackage{graphicx,scalerel}
\usepackage{amsthm}
\usepackage{thmtools,thm-restate}
\usepackage{multirow}
\usepackage{multicol}
\usepackage[linesnumbered,vlined,ruled]{algorithm2e}

\SetKwInOut{Input}{Input}
\SetKwInOut{Output}{Output}
\SetKwInOut{Return}{Return}
\SetKwFor{ForEach}{ForEach}{Do}{}
\SetKwFor{While}{While}{Do}{}
\SetKwIF{If}{ElseIf}{Else}{If}{Then}{Else If}{Else}{}
\DontPrintSemicolon

\usepackage{minitoc}

\definecolor{mydarkblue}{rgb}{0,0.08,0.45}
\usepackage[colorlinks=true,
            linkcolor=mydarkblue,
            citecolor=mydarkblue,
            filecolor=mydarkblue,
            urlcolor=mydarkblue]{hyperref}
\hypersetup{
    colorlinks,
    linkcolor={red!50!black},
    citecolor={blue!50!black},
    urlcolor={blue!80!black}
}

\tcbset{
    colback=gray!10!white,
    colframe=gray!10!white,
    boxrule=0mm,
    sharp corners,
    width=\textwidth,
    before=\vspace{0mm},
    after=\vspace{0mm},
    boxsep=1mm,
    left=0.5mm,
    right=0.5mm,
    top=1mm,
    bottom=1mm
}

\newtheorem{example}{Example}
\newtheorem{assumption}{Assumption}
\newtheorem{definition}{Definition}

\newtheorem{remark}{Remark}

\title{Diverse Dictionary Learning}

\iclrfinalcopy

\author{%
  Yujia Zheng$^{1}$ \quad
  Zijian Li$^{2}$ \quad
  Shunxing Fan$^{2}$ \quad
  \textbf{Andrew Gordon Wilson}$^{3}$ \quad
  \textbf{Kun Zhang}$^{1,2}$ \vspace{0.3em}\\
  $^1$ CMU \quad $^2$ MBZUAI \quad $^3$ NYU 
}

\begin{document}

\doparttoc 
\faketableofcontents 

\maketitle

\begin{abstract}
Given only observational data $X = g(Z)$, where both the latent variables $Z$ and the generating process $g$ are unknown, recovering $Z$ is ill-posed without additional assumptions. Existing methods often assume linearity or rely on auxiliary supervision and functional constraints. However, such assumptions are rarely verifiable in practice, and most theoretical guarantees break down under even mild violations, leaving uncertainty about how to reliably understand the hidden world. To make identifiability \textit{actionable} in the real-world scenarios, we take a complementary view: in the general settings where full identifiability is unattainable, \textit{what can still be recovered with guarantees}, and \textit{what biases could be universally adopted}? We introduce the problem of \textit{diverse dictionary learning} to formalize this view. Specifically, we show that intersections, complements, and symmetric differences of latent variables linked to arbitrary observations, along with the latent-to-observed dependency structure, are still identifiable up to appropriate indeterminacies even without strong assumptions. These set-theoretic results can be composed using set algebra to construct structured and essential views of the hidden world, such as \textit{genus-differentia} definitions. When sufficient structural diversity is present, they further imply full identifiability of all latent variables. Notably, all identifiability benefits follow from a simple inductive bias during estimation that can be readily integrated into most models. We validate the theory and demonstrate the benefits of the bias on both synthetic and real-world data.

\end{abstract}

\input{sections/intro}

\input{sections/prelim}

\input{sections/theory}

\input{sections/exp}

\input{sections/conclusion}

\section*{Acknowledgment}

The authors would like to thank the anonymous reviewers, AC, Lucas Jr, and Alan Amin for helpful comments and suggestions. The authors would also like to acknowledge the support from NSF Award No. 2229881, AI Institute for Societal Decision Making (AI-SDM), the National Institutes of Health (NIH) under Contract R01HL159805, and grants from Quris AI, Florin Court Capital, MBZUAI-WIS Joint Program, and the Al Deira Causal Education project.

\bibliography{example_paper}
\bibliographystyle{iclr2026_conference}

\appendix
\newpage

\addcontentsline{toc}{section}{Diverse Dictionary Learning \\ (Supplementary Material)} 

\part{Diverse Dictionary Learning\\\large Supplementary Material} 

{\hypersetup{linkcolor=black}
\parttoc 
}
\newpage

\input{sections/appx_proof}

\newpage
\input{sections/appx_discuss}

\input{sections/appx_exp}


\section{Disclosure Statement}

Grammar checks were performed using LLMs; no significant edits were made.

\end{document}

%% file: math_commands.tex
\usepackage{amsmath,amsfonts,bm}

\def\1{\bm{1}}

\DeclareMathAlphabet{\mathsfit}{\encodingdefault}{\sfdefault}{m}{sl}
\SetMathAlphabet{\mathsfit}{bold}{\encodingdefault}{\sfdefault}{bx}{n}



%% file: sections/intro.tex
\section{Introduction}

Dictionary learning, in its most general form, assumes that observations $X$ are generated by latent variables $Z$ through an unknown function $f$, i.e., $X=f(Z)$. The goal is to recover the latent generative process from observational data, a fundamental task in both science and machine learning. The nonparametric formulation $X=f(Z)$ unifies a wide range of latent variable models, including independent component analysis, factor analysis, and causal representation learning.

Identifiability, the ability to recover the true generative model from data, is crucial for understanding the hidden world. Yet in general dictionary learning, the problem is fundamentally ill-posed without additional assumptions, akin to finding a needle in a haystack. To reduce this ambiguity, most prior work imposes strong parametric constraints to limit the potential solution space. This practice is so widespread that, although dictionary learning is fundamentally nonparametric, it is almost always instantiated as a linear model, where observations are sparse linear combinations of latent variables \citep{olshausen1997sparse, aharon2006k, geadah2024sparse}. However, this linearity could be overly restrictive and fails to capture the complexity of many real-world generative processes. A relevant example is sparse autoencoders (SAEs), commonly used in mechanistic interpretability, especially for foundation models. Although effective in some settings, SAEs are rooted in sparse linear dictionary learning, raising concerns about their ability to represent the inherently nonlinear structure of large-scale neural representations.

Many efforts have been made to relax the linearity assumption. In nonlinear ICA, one line of work leverages auxiliary variables as weak supervision to achieve identifiability under statistical independence \citep{hyvarinen2016unsupervised, hyvarinen2019nonlinear, yao2021learning, halva2021disentangling, lachapelle2021disentanglement}, while another constrains the mixing function itself \citep{taleb1999source, moran2021identifiable, kivva2022identifiability, zhengidentifiability, buchholz2022function}. In causal representation learning, identifiability often depends on access to interventional data \citep{von2023nonparametric, jiang2023learning, jin2023learning, zhang2024causal} or counterfactual views \citep{von2021self, brehmer2022weakly}, which assume some control over the data-generating process to enable meaningful manipulation.

However, a gap remains between theoretical guarantees and practical utility. Theoretically, while additional assumptions can yield recovery guarantees, it is rarely possible to verify whether such assumptions hold in practice. Understanding what guarantees remain valid under assumption violations is therefore essential for reliably uncovering the truth in general settings. Practically, we are less concerned with identifiability under ideal conditions and more interested in which inductive biases promote recovery, especially when the ground truth is unknown. Yet most existing approaches fail to offer any guarantees under even mild violations of their assumptions, making their associated biases, such as contrastive objectives or weak supervision, difficult to generalize across settings.

Therefore, in \textbf{the general scenarios}, two questions remain:
\begin{itemize}
\vspace{-0.3em}
    \item \textit{What aspects of the latent process can still be recovered?}
    \item \textit{What inductive biases should be introduced to guide recovery?}
    \vspace{-0.3em}
\end{itemize}

To answer these questions and thus achieve \textit{actionable} identifiability, we focus on a new problem aiming to offer meaningful guarantees across a wide range of scenarios: \textit{diverse dictionary learning}. Rather than seeking to recover all latent variables in the system, we consider a complementary question: what aspects of the latent process remain identifiable even in the general settings with only basic assumptions? We show that, even without specific parametric constraints or auxiliary supervision, structured subsets of latent variables can still be identified through their set-theoretic relationships with observed variables. In particular, for any set of observed variables, the intersection, complement, and symmetric difference of their associated latent supports are identifiable (Thm. \ref{thm:gen_iden}). Moreover, the dependency structure between latent and observed variables is also identifiable up to standard indeterminacy of relabeling (Thm. \ref{thm:structure}). These flexible results naturally uncover many informative perspectives of the hidden world through the lens of \textit{diversity}: the intersection captures the common latent factors (\textit{genus}) underlying multiple objects, while the complement and symmetric difference allow us to isolate the parts that are unique or non-overlapping (\textit{differentia}), providing a principled way to understand the hidden world from the classical \textit{genus-differentia} definitions \citep{granger1984aristotle} (Prop. \ref{prop:set_full}) or the atomic regions in the Venn diagram (Sec. \ref{sec:implication}).

Since this form of identifiability is defined entirely through basic set-theoretic operations, it is highly flexible and applies to arbitrary subsets of observed variables based on set algebra. When the full set of observed variables is considered and the dependency structure between latent and observed variables is sufficiently \textit{diverse}, it becomes possible to recover all latent variables, yielding a generalized structural criterion for full identifiability (Thm. \ref{thm:perm}). Notably, for estimation, these identifiability benefits require only a simple sparsity regularization on the dependency structure, which can be readily implemented in most models that admit a Jacobian. Our theory also makes it rather universal, supporting meaningful recovery across a wide range of settings, from partial to full identifiability, and thus serves as a robust and broadly applicable regularization principle. We incorporate this universal bias into different types of generative models and observe immediate benefits from the corresponding identifiability guarantee in both synthetic and real-world datasets.

%% file: sections/prelim.tex
\vspace{-0.3em}
\section{Background and Problem Setup}\label{sec:prelim}
\vspace{-0.3em}

We adopt the standard perspective of latent variable models, where the observed world is generated from latent variables through a hidden process:  
\begin{equation} \label{eq:generating_func}
    X = g(Z),
\end{equation}  
where $X = (X_1,\cdots,X_{d_x}) \in \mathbb{R}^{d_x}$ denotes the observed variables, and $Z = (Z_1,\cdots,Z_{d_z}) \in \mathbb{R}^{d_z}$ denotes the latent variables. Let $\mathcal{X}$ and $\mathcal{Z}$ denote the supports of $X$ and $Z$, respectively.

\textbf{Connection to linear dictionary learning.} 
Our task can be viewed as a nonlinear version of classical dictionary learning. Both classical approaches \citep{olshausen1997sparse, aharon2006k} and more recent ones \citep{hu2023global, sun2025global} model observations as \textbf{linear} combinations of dictionary atoms $D$, i.e., $X = DZ$. Differently, we consider the \textbf{nonlinear} setting $X = g(Z)$, where $g$ is a nonlinear function. Although arising in different contexts, linear dictionary learning provides a useful analogy for some necessary conditions to avoid ill-posed settings. In the linear case, conditions like Restricted Isometry Property had to be introduced, which ensure that different latent codes map to distinguishable outputs, making the linear operator injective and thus no information is lost \citep{foucart2013restricted, jung2016minimax}. By analogy, the nonlinear setting also requires restrictions on $g$ to ensure injectivity. Following the literature on nonlinear identifiability, $g$ is assumed to be a $\mathcal{C}^2$ diffeomorphism onto its image (smooth and injective) \citep{hyvarinen1999nonlinear, lachapelle2021disentanglement, hyvarinen2024identifiability, moran2025towards}.  

\textbf{Connection to nonlinear identifiability results.} However, simply avoiding information loss is insufficient for full latent recovery with guarantees in the nonlinear regime. Prior work (see the survey \citep{hyvarinen2024identifiability}) addresses this by constraining the form of $g$ (e.g., post-nonlinear models) or by introducing auxiliary information, such as domain or time indices, or interventional/counterfactual data. In contrast, we focus on general real-world settings and deliberately avoid such assumptions, aiming to understand what can be recovered from this minimal setup. Naturally, some basic conditions, such as invertibility and differentiability, are necessary to rule out pathological cases, but our goal is to keep these as general as possible, even with the trade-off that recovering every latent variable becomes infeasible.

\begin{remark}[Extension to noisy processes]\label{remark:noise}
    Equation \eqref{eq:generating_func} considers a deterministic function, but it can be naturally extended to settings with additive noise using standard deconvolution \citep{kivva2022identifiability}, or to more general noise models under additional assumptions \citep{hu2008instrumental}.
\end{remark}

\begin{wrapfigure}{r}{0.2\textwidth}
  \centering
  \vspace{-15pt}
  \includegraphics[width=0.18\textwidth]{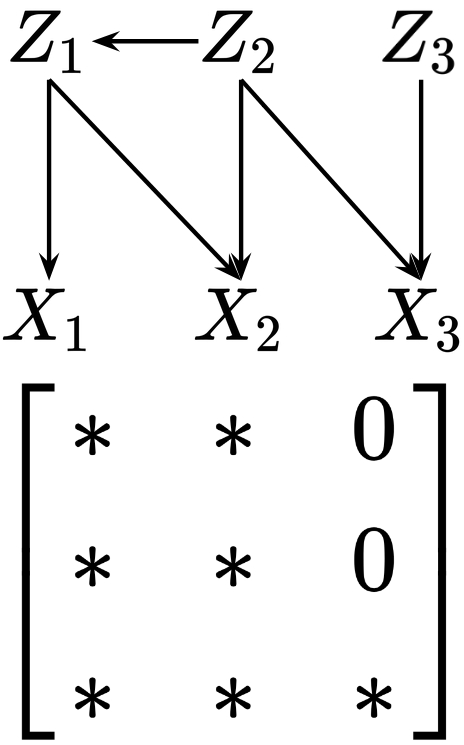}
  \vspace{-5pt}
  \caption{Example of structure.}
  \label{fig:structure}
  \vspace{-10pt}
\end{wrapfigure}

\textbf{Structure.} The dependency structure between latent and observed variables, though hidden, captures the fundamental relationships underlying the data and is inherently nonparametric. To explore theoretical guarantees in general settings, this structure provides a natural starting point \citep{moran2021identifiable, zhengidentifiability, kivva2022identifiability}. Before diving deep into the hidden relations, we need to formalize them from the nonparametric functions. We first define the nonzero pattern of a matrix-valued function as:

\begin{definition}\label{def:supp_matrix_function}
The \textbf{support} of a matrix-valued function $\mathbf{M} : \Theta \rightarrow \mathbb{R}^{m \times n}$ is the set of index pairs $(i, j)$ such that the $(i, j)$-th entry of $\mathbf{M}(\theta)$ is nonzero for some input $\theta \in \Theta$:
\begin{equation*}
\small
    \operatorname{supp}(\mathbf{M}; \Theta) \coloneqq \left\{ (i, j) \in [m] \times [n] \,\middle|\, \exists \theta \in \Theta, \mathbf{M}(\theta)_{i,j} \neq 0 \right\}.
\end{equation*}
\end{definition}
For a constant matrix, its support is a special case of Defn. \ref{def:supp_matrix_function}, which is the set of indices of non-zero elements. Then, we define the dependency structure as the support of the Jacobian of $g$:

\begin{definition}\label{def:dependency_structure}
The \textbf{dependency structure} between latent variables $Z$ and observed variables $X = g(Z)$ is defined as the support of the Jacobian matrix of $g$. Formally,
\begin{equation*}
\small
    \mathcal{S} \coloneqq \operatorname{supp}(D_z g; \mathcal{Z}) = \left\{(i,j) \in [d_x] \times [d_z] \mid \exists z \in \mathcal{Z}, \frac{\partial g_i(z)}{\partial z_j} \neq 0 \right\}.
\end{equation*}
\end{definition}
This structure $\mathcal{S}$ captures which latent variables functionally influence which observed variables through the generative map $g$. It might be noteworthy that, since it is defined via the Jacobian, it reflects functional rather than statistical dependencies. In particular, it does not require statistical independence of $Z$ and is therefore not limited to the mixing structures typically considered in ICA.

\begin{example}
    Figure~\ref{fig:structure} illustrates the dependency structure of a generative process. The top panel shows the ground-truth mapping from latent variables $Z = (Z_1, Z_2, Z_3)$ to observed variables $X = (X_1, X_2, X_3)$. The bottom panel shows the support of the Jacobian $D_Z g(Z)$, where non-zero entries are marked with ``$\ast$''. Notably, the Jacobian structure also captures dependencies between latent variables, such as the interaction between $Z_1$ and $Z_2$.
\end{example}

%% file: sections/theory.tex
\vspace{-0.3em}
\section{Theory}
\label{sec:theory}
\vspace{-0.3em}

In this section, we develop the identifiability theory of diverse dictionary learning. Our theory begins with a generalized notion of identifiability based on set-theoretic indeterminacy (Sec. \ref{sec:charac}), capturing what remains recoverable under minimal assumptions. We then illustrate its practical implications, such as disentanglement and atomic region recovery, through concrete examples (Sec. \ref{sec:implication}). These insights motivate the formal guarantees in Thms. \ref{thm:gen_iden} and \ref{thm:structure} (Sec. \ref{sec:general}). Finally, we show how the same framework extends naturally to element-wise identifiability under a generalized structural condition (Thm. \ref{thm:perm}, Sec. \ref{sec:ele}). \textit{All proofs are provided in Appx. \ref{appx:proofs}.}

\vspace{-0.3em}
\subsection{Characterization of generalized identifiability} \label{sec:charac}
\vspace{-0.3em}

As previously discussed, identifying all latent variables is fundamentally ill-posed without additional information, such as restricted functional classes or multiple distributions. In general scenarios where such constraints are absent, a natural question arises: what aspects of the latent process remain recoverable? Before presenting our identifiability results, we first formalize this goal, which has not been addressed in the existing literature.

We begin by defining when two models are observationally indistinguishable from the perspective of the observed data, which is the goal of estimation based on observation. 

\begin{definition}[\textbf{Observational equivalence}]
    we say there is an \textbf{\textit{observational equivalence}} between two models $\theta = (g,p_Z)$ and $\hat{\theta} = (\hat{g},p_{\hat{Z}})$, denoted $\theta \sim_{\text{obs}} \hat{\theta}$, if and only if,
    \begin{equation*}
    \small
        p(x;\theta) = p(x;\hat{\theta}), \quad \forall x \in \mathcal{X}.
    \end{equation*}
\end{definition}
Given that estimation yields an observationally equivalent model, our goal is to determine whether the latent variables recovered by this model correspond meaningfully to those in the ground-truth model. Since we avoid placing restrictive assumptions on the entire system, we adopt a localized perspective: instead of analyzing global correspondence, we examine the relationship between latent components at the level of specific observed variables. Inspired by set theory, we introduce a new notion of indeterminacy that formalizes ambiguity through basic set-theoretic operations. 

\begin{definition}[\textbf{Latent index set}] 
For any set of observed variables $X_S$, its latent index set $I_S \subseteq [d_z]$ is defined as 
\begin{equation*}
 \small
    I_S := \{\, i \in [d_z] \mid \tfrac{\partial X_S}{\partial Z_i} \neq 0 \,\},
\end{equation*}
i.e., the set of indices of latent variables $Z_{I_S}$ that influence $X_S$.
\end{definition}

\begin{definition}[\textbf{Set-theoretic indeterminacy}] \label{def:set}
    There is a \textbf{\textit{set-theoretic indeterminacy}} between two models $\theta = (g,p_Z)$ and $\hat{\theta} = (\hat{g},p_{\hat{Z}})$, denoted $\theta \sim_{\text{set}} \hat{\theta}$, if and only if, for any two sets of observed variables $X_K$ and $X_V$, and their latent index sets $I_K$ and $I_V$, there exists a permutation $\pi$ over $[d_z]$ such that $Z_i$ is not a function of $\hat{Z}_{\pi(j)}$\footnote{Given the standard invertibility assumption, the reverse also holds because of the block-diagonal Jacobian, which applies similarly in related definitions.} for all $(i, j)$ satisfying at least one of the following:
    \begin{enumerate}[label=(\roman*),ref=\roman*]
    \item (\textit{Intersection}) $i \in I_K \cap I_V$, $j \in I_K \Delta I_V$;
    \item (\textit{Symmetric difference}\footnote{The symmetric difference $I_K \Delta I_V$ denotes elements in $I_K$ or $I_V$ but not in both, i.e., $(I_K \setminus I_V) \cup (I_V \setminus I_K)$.}) $i \in I_K \Delta I_V$, $j \in I_K \cap I_V$;
    \item (\textit{Complement}) $i \in I_K \setminus I_V$, $j \in I_V \setminus I_K$, or $i \in I_V \setminus I_K$, $j \in I_K \setminus I_V$.
    \end{enumerate}
\end{definition}

\begin{wrapfigure}{r}{0.3\textwidth}
  \centering
  \vspace{-10pt}
  \includegraphics[width=0.28\textwidth]{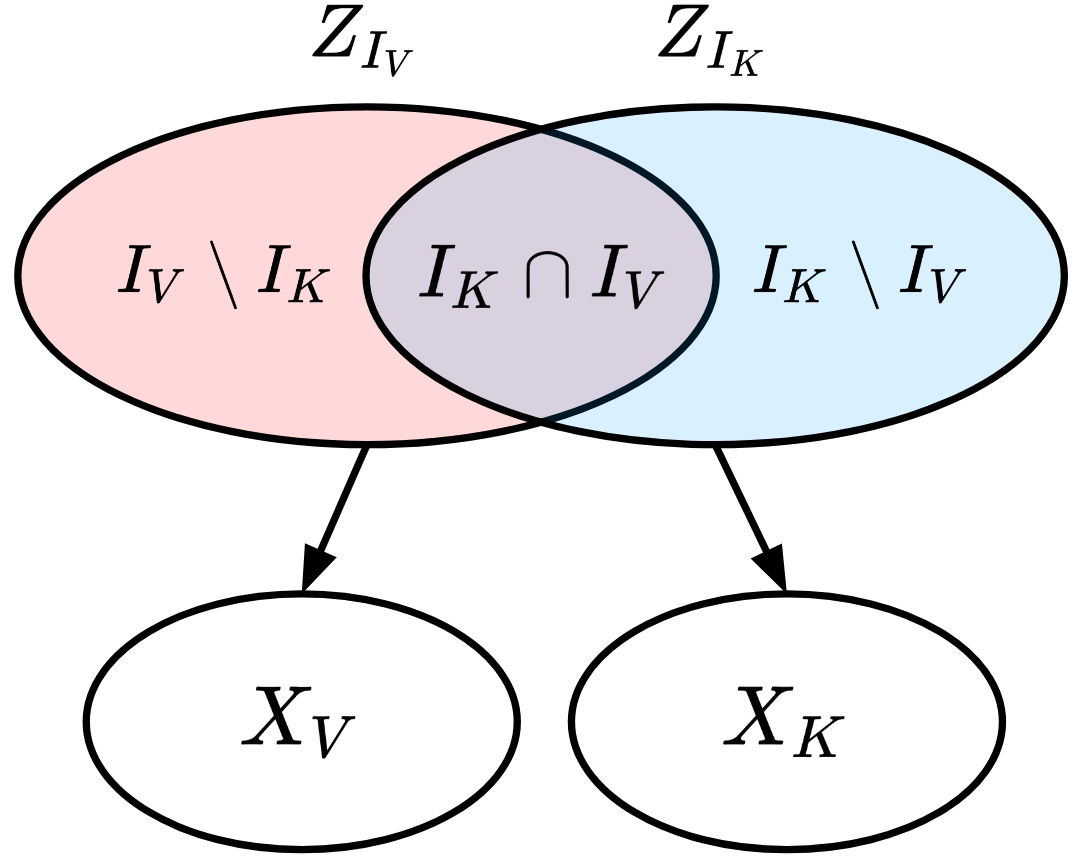}
  \vspace{-5pt}
  \caption{Example of Defn. \ref{def:set}.}
  \label{fig:set}
  \vspace{-13pt}
\end{wrapfigure}

Intuitively, set-theoretic indeterminacy guarantees that certain components of the latent variables defined by basic set-theoretic operations are disentangled from the rest, with an example as:
\begin{example}
    \looseness=-1
    Figure~\ref{fig:set} illustrates latent variables indexed by $I_K$ and $I_V$, which influence the observed variable sets $X_K$ and $X_V$. The intersection $I_K \cap I_V$ contains shared latent factors, while the symmetric difference $I_K \Delta I_V$ consists of those unique to one set but not the other. According to set-theoretic indeterminacy, latent variables in the intersection $I_K \cap I_V$ cannot be expressed as functions of those in the symmetric difference $I_K \Delta I_V$, ensuring that shared components remain disentangled from exclusive ones. Similarly, variables in the symmetric difference $I_K \Delta I_V$ cannot be entangled with $I_K \cap I_V$, preserving directional separability. Finally, the complement condition prohibits mutual entanglement between the exclusive parts $I_K \setminus I_V$ and $I_V \setminus I_K$, guaranteeing that what is unique to one observed group cannot explain what is unique to another.
\end{example}

Because these operations form the foundation of set algebra, they can be flexibly composed to derive a variety of meaningful perspectives on the hidden variables, which we will detail later. We are now ready to define what it means for a model to have generalized identifiability.

\begin{definition}[\textbf{Generalized identifiability}] \label{def:gen_iden}
    For a model $\theta = (g,p_Z)$, we have \textit{\textbf{generalized identifiability}}, if and only if, for any other model $\hat{\theta} = (\hat{g},p_{\hat{Z}})$,
    \begin{equation*}
    \small
         \theta \sim_{\text{obs}} \hat{\theta} \implies \theta \sim_{\text{set}} \hat{\theta}.
    \end{equation*}
\end{definition}


\vspace{-0.3em}
\subsection{Implications of generalization}
\label{sec:implication}
\vspace{-0.3em}

In the previous section, we introduced a new characterization of identifiability suited to general, unconstrained settings. Built from basic set-theoretic operations, this formulation appears flexible and composable. Yet it remains unclear how general it truly is, and more importantly, why that generality matters. To answer this, we examine its implications through a concrete example in Fig.~\ref{fig:venn}, highlighting both its expressive power and practical utility. We begin with several interesting implications of the generalization:



\begin{restatable}[Implications of generalized identifiability]{proposition}{setfull}\label{prop:set_full}
    For any two models $\theta = (g,p_Z)$ and $\hat{\theta} = (\hat{g},p_{\hat{Z}})$, if $\theta \sim_{\text{set}} \hat{\theta}$, then for any two sets of observed variables $X_K$ and $X_V$, and their corresponding latent index sets $I_K$ and $I_V$, $Z_i$ is not a function of $\hat{Z}_{\pi(j)}$ for all $(i, j)$ satisfying at least one of the following, where $\pi$ is a permutation:
    \begin{enumerate}[label=(\roman*),ref=\roman*]
    \item (\textit{Object-centric}) $i \in I_K$, $j \in I_V \setminus I_K$ or $i \in I_V$, $j \in I_K \setminus I_V$;
    \item (\textit{Individual-centric}) $i \in (I_K \setminus I_V)$, $j \in I_V$, or $i \in (I_V \setminus I_K)$, $j \in I_K$;
    \item (\textit{Shared-centric}) $i \in I_K \cap I_V$, $j \in I_K \Delta I_V$.
    \end{enumerate}
\end{restatable}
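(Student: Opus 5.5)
The plan is to show that each of the three listed index-pair regions is a union of the regions (i)–(iii) in Defn.~\ref{def:set}, using one fixed permutation. Fix $X_K, X_V$ and let $\pi$ be the permutation that Defn.~\ref{def:set} provides for this pair. Since $\theta \sim_{\text{set}} \hat\theta$, we know that $Z_i$ is not a function of $\hat Z_{\pi(j)}$ for every pair $(i,j)$ in
\[
R := \big((I_K\cap I_V)\times(I_K\Delta I_V)\big)\cup\big((I_K\Delta I_V)\times(I_K\cap I_V)\big)\cup\big((I_K\setminus I_V)\times(I_V\setminus I_K)\big)\cup\big((I_V\setminus I_K)\times(I_K\setminus I_V)\big).
\]
The task is therefore purely set-theoretic. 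We must check that each pair $(i,j)$ in the statement lies in $R$, and then the conclusion follows with the same $\pi$. To do this, partition $I_K\cup I_V$ into the three atoms $A=I_K\setminus I_V$, $B=I_K\cap I_V$ and $C=I_V\setminus I_K$. Then $R=(B\times A)\cup(B\times C)\cup(A\times B)\cup(C\times B)\cup(A\times C)\cup(C\times A)$, i.e.\ every off-diagonal pair of distinct atoms.

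Case (iii), shared-centric, is literally condition (i) of Defn.~\ref{def:set}, so there is nothing to prove.

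Case (i), object-centric: take $i\in I_K=A\cup B$ and $j\in I_V\setminus I_K=C$. Then $(i,j)\in (A\times C)\cup(B\times C)\subseteq R$. The symmetric part, $i\in I_V=B\cup C$ and $j\in A$, gives $(B\times A)\cup(C\times A)\subseteq R$.

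Case (ii), individual-centric: take $i\in A$ and $j\in I_V=B\cup C$. Then $(i,j)\in(A\times B)\cup(A\times C)\subseteq R$. The symmetric part, $i\in C$ and $j\in I_K=A\cup B$, gives $(C\times A)\cup(C\times B)\subseteq R$.

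The only point that needs care is that a single permutation $\pi$ serves all cases simultaneously. This holds because Defn.~\ref{def:set} supplies one $\pi$ per pair $(X_K,X_V)$, valid for all three of its conditions at once, and we reuse exactly that $\pi$. Note also that within each case $i$ and $j$ always lie in distinct atoms, so no diagonal pair ($i=j$ inside the same atom) ever needs to be excluded. Beyond checking this, I expect no real obstacle: the proof amounts to the atom decomposition above.
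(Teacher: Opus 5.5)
Your proof is correct and takes essentially the same approach as the paper. The paper likewise splits indices into $I_K\cap I_V$, $I_K\setminus I_V$ and $I_V\setminus I_K$, covers each resulting pair by the intersection, symmetric-difference or complement clause of Defn.~\ref{def:set}, and notes that case (iii) is literally the intersection clause; your atom notation $A,B,C$ packages that case analysis more compactly, and reusing the single $\pi$ supplied for the pair $(X_K,X_V)$ is exactly what the paper does implicitly.
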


\begin{wrapfigure}{r}{0.4\textwidth}
  \centering
  \vspace{-13pt}
  \includegraphics[width=0.39\textwidth]{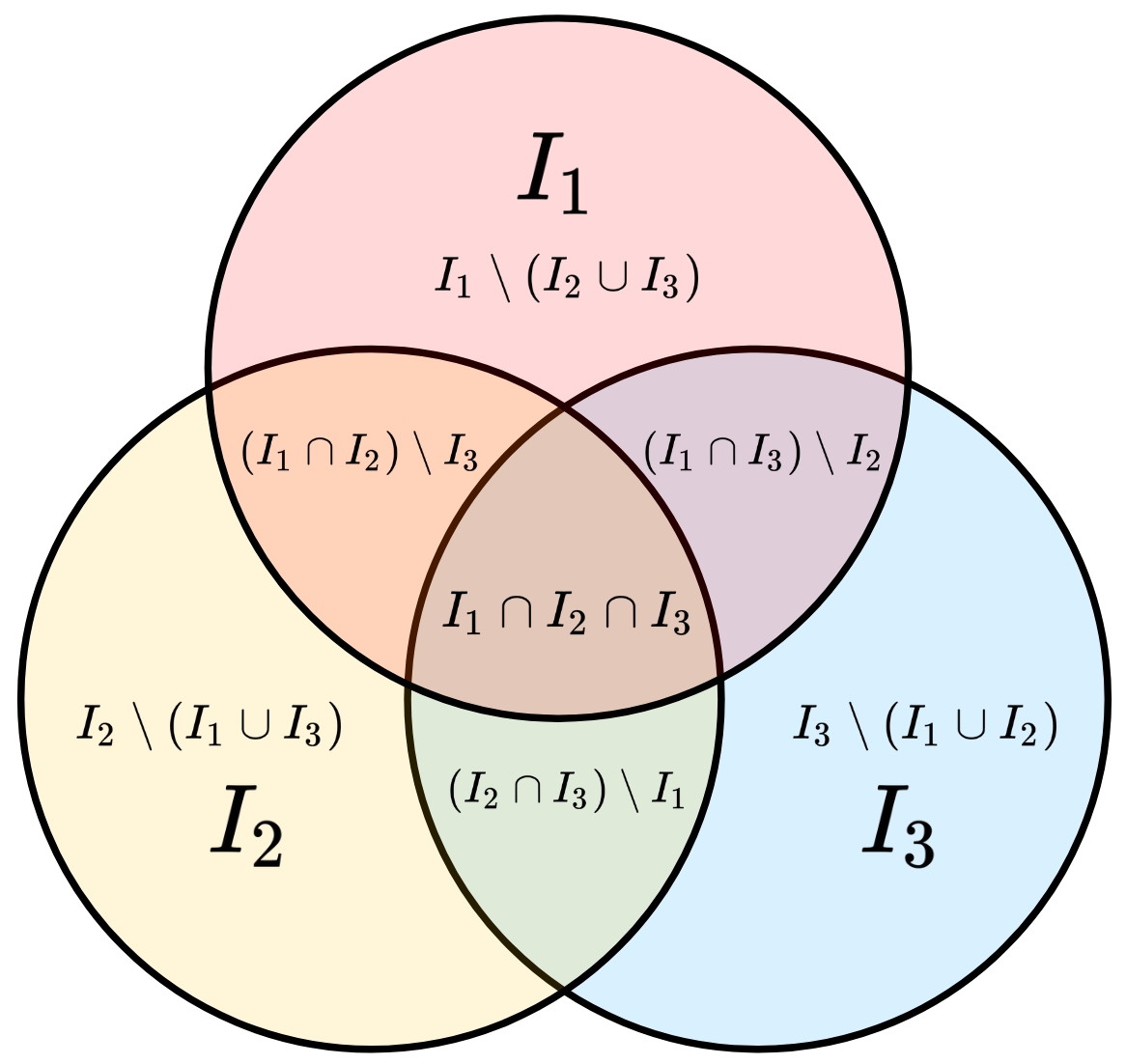}
  \caption{Running example.}
  \label{fig:venn}
  \vspace{-18pt}
\end{wrapfigure}

\begin{example}
    In Fig.~\ref{fig:venn}, Prop.~\ref{prop:set_full} implies that, if we consider two groups of observed variables, such as $X1$ and $\{X_2,X_3\}$, regions like $I_1 \setminus (I_2 \cup I_3)$ illustrate individual-centric disentanglement, where latents unique to one group must be disentangled from the rest. Regions such as $I_1$, $I_2$, or $I_3$ represent object-centric disentanglement, where latents relevant to a single object must remain disentangled from the rest. The shared part can also be disentangled in a similar manner.
\end{example}

\textbf{Why does it matter in the real world?}
These implication types correspond to meaningful structures in real-world tasks. Object-centric disentanglement aligns with modularity in object-centric learning, where each object should have its own latent representation. Individual-centric disentanglement supports domain adaptation by isolating domain-specific factors. Shared-centric disentanglement captures common factors across domains or entities, which is essential for transferability and generalization. These patterns emerge naturally from set-theoretic indeterminacy and offer a principled way to design models that reflect the genus-differentia structure.

\textbf{Atomic regions in the Venn diagram.} If the union of the latent index sets covers the full latent space, the generalized identifiability guarantees in Defn.~\ref{def:set} extend to every atomic region in the corresponding Venn diagram.\footnote{An atomic region in the Venn diagram is a non-empty set of the form $\bigcap_{i=1}^n B_i$, where each $B_i \in \{I_i, [d_z] \setminus I_i\}$ for a finite collection of sets $\{I_1, \dots, I_n\}$. In the example, these correspond to all $7$ distinct regions.} 

\begin{example} [Identifying atomic regions]
    Let $I_1$, $I_2$, and $I_3$ be the latent index sets associated with three observed variables in Fig.~\ref{fig:venn}. Consider the atomic region $(I_1 \cap I_2) \setminus I_3$. To disentangle it from the rest, we first take $X_K = X_1$, $X_V = X_2$, so $I_K = I_1$, $I_V = I_2$. Then $i \in (I_1 \cap I_2) \setminus I_3 \subseteq I_K \cap I_V$, and $j \in I_K \Delta I_V = (I_1 \setminus I_2) \cup (I_2 \setminus I_1)$, ensuring $Z_i$ is not a function of $Z_j$ in the symmetric difference. Second, take $X_K = X_1 \cup X_2$, $X_V = X_3$, so $I_K = I_1 \cup I_2$ and $I_V = I_3$. Then $i \in (I_1 \cap I_2) \setminus I_3 \subseteq I_K \setminus I_V$ and $j \in I_3 = I_V$, ensuring $Z_i$ is also disentangled from latents of $X_3$. Together, these guarantee that the atomic region $(I_1 \cap I_2) \setminus I_3$ is disentangled from the rest. Other cases follow similarly and are in Appx. \ref{appx:discuss}. Therefore, each atomic region is disentangled from all other variables, and thus is region-wise (block-wise) identifiable\footnote{A model is block-wise identifiable \citep{von2021self} if the mapping between the estimated and ground-truth latent variables is a composition of block-wise invertible functions and permutations. Intuitively, variables can be entangled within the same block (set) but not across different blocks (sets).} under the invertibility condition.
\end{example}

\begin{remark}[Connection to block-identifiability]
    By leveraging basic set-theoretic operations, we can construct a Venn diagram over the latent supports and identify all atomic regions, each defined as a minimal, non-overlapping region closed under finite intersections and complements. This perspective is conceptually related to block-wise identifiability \citep{von2021self,li2023subspace,yao2024multi}, but differs fundamentally in its assumptions and goals. Prior work achieves block identifiability by exploiting additional information such as multiple views or domains \citep{von2021self,yao2024multi, li2023subspace}. In contrast, our approach requires no such weak supervision. Notably, \citet{yao2024multi} also proposes an identifiability algebra, but in the \textit{opposite} direction: they show that the intersection of latent groups can be identified \textit{after} the groups themselves are recovered using multi-view signals. Our formulation instead starts from basic assumptions without any additional information, and directly targets the identifiability of intersections and complements without relying on external (weak) supervision. Of course, since the goals and setups are fundamentally different, our results do not supersede existing block-identifiability results, but rather offer a complementary perspective on recovering local structures.
\end{remark}

\vspace{-0.3em}
\subsection{Generalized identifiability} \label{sec:general}
\vspace{-0.3em}

Having established the characterization and implications of generalized identifiability, we now turn to its formal proof. Let $H$ denote a matrix sharing the support of the matrix-valued function $h$ in the identity $D_Z g(z)\, h(z, \hat{z}) = D_{\hat{Z}} \hat{g}(\hat{z})$. We begin by introducing the following assumption, which ensures sufficient nonlinearity in the system. Some arguments are omitted for brevity.

\begin{assumption} [\textbf{Sufficient nonlinearity}]
\label{assum:connection}
    For each $i \in [d_x]$, there exists a set \( S_i \) of \( \|{\left(D_Z g\right)}_{i,\cdot}\|_0 \) points such that the corresponding vectors for a model $(g,p_Z)$:
    \begin{equation*}
    \small
       \left( \frac{\partial X_i}{\partial Z_1}, \frac{\partial X_i}{\partial Z_2}, \ldots, \frac{\partial X_i}{\partial Z_{d_z}} \right) \bigg|_{z= z^{(k)}}, k \in S_i,
    \end{equation*}
    are linearly independent, where $z^{(k)}$ denotes a sample with index $k$ and $\operatorname{supp}((D_Z g(z^{(k)}) H)_{i,\cdot}) \subseteq \operatorname{supp}((D_{\hat{Z}}\hat{g})_{i,\cdot}).$
\end{assumption}
\textbf{Interpretation.} The assumption ensures the connection between the structure and the nonlinear function. In the asymptotic cases, we can usually find several samples in which the corresponding Jacobian vectors are linearly independent, i.e., span the support space. The assumption of non-exceeding support at these points generally rules out a global cancellation of a genuine dependency across all samples, which is intuitively similar to a violation of faithfulness.


\textbf{Connection to the literature.} 
The sufficient nonlinearity assumption makes it feasible to draw the connection between the Jacobian and the structure. It has been widely used in the literature \citep{lachapelle2021disentanglement,zhengidentifiability,kong2023identification,yan2023counterfactual}, and aligns with the sufficient variability assumption \citep{hyvarinen2016unsupervised, khemakhem2020variational, sorrenson2020disentanglement, lachapelle2021disentanglement, zhang2024causal, lachapelle2024additive}. While most prior works often focus on variability across environments, sufficient nonlinearity imposes variability in the Jacobians across multiple samples to span the support space, following the spirit in \citep{lachapelle2021disentanglement, zhengidentifiability, lachapelle2024additive}. 

Then we are ready to present our main theorem for the generalized identifiability:

\begin{restatable}[\textbf{Generalized identifiability}]{theorem}{set}\label{thm:gen_iden}
    Consider any two models $\theta = (g,p_Z)$ and $\hat{\theta} = (\hat{g},p_{\hat{Z}})$ following the process in Sec. \ref{sec:prelim}. Suppose Assum. \ref{assum:connection} holds and:
    \begin{enumerate}[label=\roman*.,ref=\roman*]
        \vspace{-0.3em}
        \item The probability density of $Z$ is positive in $\mathbb{R}^{d_z}$;
        \vspace{-0.3em}
        \item (Sparsity regularization\footnote{Notably, this is a regularization during estimation, instead of an assumption restricting the data.}) $\|D_{\hat{Z}} \hat{g} \|_0 \leq \|D_Z g \|_0$.
        \vspace{-0.3em}
    \end{enumerate}
    Then if $\theta \sim_{\text{obs}} \hat{\theta}$, we have generalized identifiability (Defn. \ref{def:gen_iden}), i.e., $\theta \sim_{\text{set}} \hat{\theta}$.
\end{restatable}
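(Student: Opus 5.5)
The plan follows the standard Jacobian-support argument of \citet{lachapelle2021disentanglement} and \citet{zhengidentifiability}. First we relate the two models. Since $\theta \sim_{\text{obs}} \hat{\theta}$, both $g$ and $\hat g$ are diffeomorphisms onto the same image. Hence $h := g^{-1}\circ \hat g$ is a diffeomorphism with $Z = h(\hat Z)$. Differentiating $\hat g = g \circ h$ gives the identity $D_{\hat Z}\hat g(\hat z) = D_Z g(z)\, D h(\hat z)$, which is the relation $D_Z g\, h = D_{\hat Z}\hat g$ used before Assum.~\ref{assum:connection}, with $H$ the support pattern of $Dh$. Condition (i), a positive density, ensures these identities hold on all of $\mathbb{R}^{d_z}$, so supports can be compared pointwise. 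Since $Dh$ is invertible everywhere, its determinant is nonzero. Expanding it via the Leibniz formula gives a permutation $\pi$ with $(j,\pi(j)) \in \operatorname{supp}(H)$ for all $j$; this is the permutation $\pi$ in Defn.~\ref{def:set}.

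The second step shows that the structure of $g$ is transported into $\hat g$ through $H$. Fix a row $i\in[d_x]$ and any $j\in\operatorname{supp}((D_Zg)_{i,\cdot})$ and $k$ with $(j,k)\in\operatorname{supp}(H)$. By Assum.~\ref{assum:connection}, the vectors $(D_Zg(z^{(l)}))_{i,\cdot}$, $l\in S_i$, span the coordinate subspace indexed by $\operatorname{supp}((D_Zg)_{i,\cdot})$. We can therefore form a linear combination equal to the unit vector $e_j$. Each $(D_Zg(z^{(l)})H)_{i,\cdot}$ has support inside $\operatorname{supp}((D_{\hat Z}\hat g)_{i,\cdot})$, so $e_j^\top H$ does too. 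This yields the key inclusion: $(i,j)\in\mathcal S$ and $(j,k)\in\operatorname{supp}(H)$ imply $(i,k)\in\operatorname{supp}(D_{\hat Z}\hat g)$. Applying it with $k=\pi(j)$ shows that $\pi$ maps $\mathcal S$ injectively into $\hat{\mathcal S}$, so $\|D_Zg\|_0\le\|D_{\hat Z}\hat g\|_0$. Condition (ii) gives the reverse inequality, so $\hat{\mathcal S}$ equals the $\pi$-relabeling of $\mathcal S$. Returning to the key inclusion, any off-permutation entry $(j,k)\in\operatorname{supp}(H)$ with $(i,j)\in\mathcal S$ must satisfy $(i,\pi^{-1}(k))\in\mathcal S$. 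In words, if $Z_j$ depends on $\hat Z_k$, then every observed variable influenced by $Z_j$ is also influenced by $Z_{\pi^{-1}(k)}$.

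The third step translates this into the set-theoretic conditions. Let $I(j)=\{i:(i,j)\in\mathcal S\}$ be the set of observed children of $Z_j$. The conclusion of step two says that $Z_j$ being a function of $\hat Z_{\pi(m)}$ forces $I(j)\subseteq I(m)$. For index sets $I_K,I_V$ and $j\in I_K\cap I_V$, $m\in I_K\Delta I_V$, suppose without loss of generality $m\in I_K\setminus I_V$. Then $I(j)$ meets $V$ but $I(m)$ does not, so the containment fails; this gives case (i). For $j\in I_K\setminus I_V$ and $m\in I_K\cap I_V$ or $m\in I_V\setminus I_K$, $I(j)$ meets $K$ but $I(m)$ must be checked against $I(j)$. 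Here the direction of the inclusion matters, and I expect this to be the main obstacle. My first attempt is to use the invertibility of $h$ together with the footnote in Defn.~\ref{def:set}. Since $\operatorname{supp}(H)$ and $\operatorname{supp}(H^{-1})$ are both compatible with the containment preorder, and the sparsity equality forces $H$ to be block-triangular up to $\pi$ in that preorder, the inverse supplies the opposite containment. This yields the symmetric ``not a function'' statements in cases (ii) and (iii).
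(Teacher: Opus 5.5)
\textbf{Agreement with the paper's proof.} Your Steps 1--2 match the paper's proof in substance:
\begin{enumerate}
\item the chain rule for $h$;
\item the spanning argument, which gives the paper's inclusion: row $j$ of $H$ is supported in $\operatorname{supp}((D_{\hat Z}\hat g)_{i,\cdot})$ whenever $(i,j)\in\mathcal S$;
\item a matching permutation, where your Leibniz expansion is a cleaner replacement for the paper's appeal to Hall's theorem;
\item the sparsity equality $(D_Zg)_{i,j}\neq 0 \Leftrightarrow (D_{\hat Z}\hat g)_{i,\pi(j)}\neq 0$.
\end{enumerate}
Your rule ``$\partial Z_j/\partial\hat Z_{\pi(m)}\not\equiv 0$ implies $I(j)\subseteq I(m)$'' then gives case (i) as in the paper. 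It also gives case (iii) directly, with no inverse needed: for $j\in I_K\setminus I_V$ and $m\in I_V\setminus I_K$, $I(j)$ meets $K$ while $I(m)$ does not.

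\textbf{The gap is case (ii), and your fix does not close it.} Matrices whose support respects a reflexive, transitive preorder form a unital algebra that is closed under inversion. So the pointwise inverse $D\phi$ satisfies $\partial\hat Z_{\pi(m)}/\partial Z_j\not\equiv 0 \Rightarrow I(m)\subseteq I(j)$. For $j\in I_K\Delta I_V$ and $m\in I_K\cap I_V$, this shows that $\hat Z_{\pi(m)}$ is not a function of $Z_j$. That is the statement with $Z$ and $\hat Z$ exchanged. It says nothing about $\partial Z_j/\partial\hat Z_{\pi(m)}$, which is what Defn.~\ref{def:set}(ii) requires. The footnote cannot bridge the two directions: it presupposes a block-diagonal Jacobian, whereas your analysis correctly yields only block-triangularity.

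\textbf{Case (ii) is false as stated.} Read as $\partial Z_i/\partial\hat Z_{\pi(j)}=0$, which is how the paper's proofs use it, case (ii) fails under the theorem's hypotheses, so no argument can close this step. Consider the following model.
\begin{enumerate}
\item Take $d_z=d_x=2$, $p_Z$ standard Gaussian, and $g(z)=(z_1^3+z_1+z_2,\ z_2)$.
\item Set $\hat Z=(Z_1-Z_2,\ Z_2)$ and $\hat g(\hat z)=g(\hat z_1+\hat z_2,\ \hat z_2)$.
\end{enumerate}
Then $\hat g(\hat Z)=g(Z)$, $\hat Z$ has a positive density, and $\|D_{\hat Z}\hat g\|_0=\|D_Zg\|_0=3$. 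Assum.~\ref{assum:connection} holds: row 1 has gradients $(1,1)$ and $(4,1)$ at the points $(0,0)$ and $(1,0)$, and the support condition is immediate to check. For $X_K=X_1$ and $X_V=X_2$ we get $I_K\cap I_V=\{2\}$ and $I_K\Delta I_V=\{1\}$. Case (i) forces $\pi=\mathrm{id}$, because $Z_2=\hat Z_2$. Case (ii) then fails, because $Z_1=\hat Z_1+\hat Z_2$ depends on $\hat Z_2$.

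\textbf{The paper's treatment of (ii) does not supply a missing idea.} It correctly shows that $Z_{I_K\cap I_V}$ depends only on $\hat Z_{\pi(I_K\cap I_V)}$. It then overstates this as ``$Z_t$ depends only on $\hat Z_{\pi(t)}$''. Finally it infers, from the disjointness of $I_K\Delta I_V$ and $I_K\cap I_V$, that $Z_r$ does not depend on $\hat Z_{\pi(t)}$. That last step is a non sequitur, and the example above refutes it.

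\textbf{What your argument actually proves.} Your argument, like the valid part of the paper's, establishes cases (i) and (iii) as stated. It gives case (ii) only in the reversed form $\partial\hat Z_{\pi(j)}/\partial Z_i=0$ for $i\in I_K\Delta I_V$ and $j\in I_K\cap I_V$.
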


We further show that the dependency structure is identifiable up to a standard relabeling indeterminacy, providing structural insight when the underlying connections are of interest.

\begin{restatable}[\textbf{Structure identifiability}]{theorem}{structure}\label{thm:structure}
    Consider any two models $\theta = (g,p_Z)$ and $\hat{\theta} = (\hat{g},p_{\hat{Z}})$ following the process in Sec. \ref{sec:prelim}. Suppose assumptions in Thm. \ref{thm:gen_iden} hold. If $\theta \sim_{\text{obs}} \hat{\theta}$, the support of the Jacobian matrix $D_{\hat{z}} \hat{g}$ is identical to that of $D_z g$, up to a permutation of column indices.
\end{restatable}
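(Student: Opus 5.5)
The plan follows the standard sparsity-based structure-identifiability argument (in the spirit of \citet{lachapelle2021disentanglement, zhengidentifiability}), reusing the ingredients behind Thm.~\ref{thm:gen_iden}.

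\textbf{Step 1: relate the two Jacobians.} From $\theta \sim_{\text{obs}} \hat{\theta}$, invertibility of $g$ and $\hat g$, and positivity of the density of $Z$, there is a diffeomorphism $h = g^{-1}\circ \hat g$ with $\hat z \mapsto z$. Differentiating $\hat g = g\circ h$ gives
\begin{equation*}
D_{\hat Z}\hat g(\hat z) = D_Z g(z)\, h(z,\hat z),
\end{equation*}
where $h(z,\hat z)$ is the (invertible) Jacobian of the reparametrization. Let $H$ be a constant matrix with the support of $h$, generic on that support.

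\textbf{Step 2: row-wise inclusion via Assum.~\ref{assum:connection}.} Fix a row $i$. For every $j \in \operatorname{supp}((D_Zg)_{i,\cdot})$, I claim that $\operatorname{supp}(H_{j,\cdot}) \subseteq \operatorname{supp}((D_{\hat Z}\hat g)_{i,\cdot})$. At the points $z^{(k)}$, $k\in S_i$, the row $(D_Zg(z^{(k)}))_{i,\cdot}$ multiplied by $H$ has support inside that of $(D_{\hat Z}\hat g)_{i,\cdot}$. These rows are linearly independent, and there are $\|(D_Zg)_{i,\cdot}\|_0$ of them, so they span the coordinate subspace on that row's support. Hence each unit vector $e_j$ with $j$ in the support is a combination of them. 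Consequently $e_j^\top H = H_{j,\cdot}$ is supported in $\operatorname{supp}((D_{\hat Z}\hat g)_{i,\cdot})$; genericity of $H$ rules out cancellation. Summarizing, $(i,j)\in\mathcal S$ and $(j,l)\in\operatorname{supp}(H)$ together imply $(i,l)\in\hat{\mathcal S}$.

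\textbf{Step 3: use invertibility and sparsity to force a permutation.} Since $h$ is invertible, its determinant is not identically zero, so there is a permutation $\sigma$ with $H_{j,\sigma(j)}\neq 0$ for all $j$. By Step 2, $(i,j)\in\mathcal S$ implies $(i,\sigma(j))\in\hat{\mathcal S}$, and so $\|D_{\hat Z}\hat g\|_0 \ge |\mathcal S| = \|D_Zg\|_0$. The sparsity regularization then gives equality. The column-permuted copy of $\mathcal S$ is therefore contained in $\hat{\mathcal S}$ and has the same cardinality, so the two coincide: $\hat{\mathcal S} = \{(i,\sigma(j)) : (i,j)\in\mathcal S\}$. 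This is exactly identity of supports up to a column permutation.

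\textbf{Main obstacle.} The delicate part is Step 2. Passing from ``the support of $D_Zg(z^{(k)})H$ is contained in the support of $\hat g$'s row'' to a statement about individual rows $H_{j,\cdot}$ needs care. One must argue through the linear span together with a genericity or measure-zero argument on $H$, so that no cancellations occur within the span. I would first try to handle this by noting that the support condition on a row vector is a linear-subspace constraint: each vector $v$ in the span satisfies $\operatorname{supp}(vH)\subseteq \hat{\mathcal S}_{i}$, and the set of such $v$ is itself a linear subspace, so the constraint passes to each $e_j$.
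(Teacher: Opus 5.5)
Your proposal is correct and follows essentially the same route as the paper: the chain rule $D_{\hat Z}\hat g = D_Z g\, h$, then Assum.~\ref{assum:connection} with the span argument to get $\operatorname{supp}(H_{j,\cdot})\subseteq\operatorname{supp}((D_{\hat Z}\hat g)_{i,\cdot})$ for every $j\in\operatorname{supp}((D_Zg)_{i,\cdot})$, then a permutation on the support of the invertible Jacobian, and finally the sparsity constraint to turn the injective inclusion into equality. The differences are cosmetic: you get the permutation from the Leibniz expansion of $\det h$ where the paper uses Hall's marriage theorem, and genericity of $H$ is unnecessary because, as you note at the end, the support condition is a linear-subspace constraint that passes directly to each $e_j$.
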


\textbf{Universal inductive bias.}
The first additional condition of positive density is standard and appears in nearly all previous identifiability results. We therefore focus on the sparsity regularization. Note that this is \textit{not an assumption} on the data-generating process itself, but a practical inductive bias applied only during estimation. Thus, \textbf{the ground-truth process does \textit{not} need to be sparse at all}. 

This dependency sparsity reflects an inductive bias toward the simplicity of the hidden world. Among the many interpretations of Occam’s razor, our approach aligns with the connectionist view, which prefers to always shave away unnecessary relations. This principle is fundamental and has been extensively studied in several fields. For example, in structural causal models, fully connected graphs are always Markovian to the observed distribution, but principles such as faithfulness, frugality, and minimality are used to eliminate spurious or redundant edges, revealing the true causal structure~\citep{zhang2013comparison}. These simplicity criteria have been validated both theoretically and empirically over decades, supporting the use of sparsity as a reasonable inductive bias during regularization. Moreover, this regularization is highly practical: it can be integrated into most differentiable models, as long as gradients of the mappings with respect to latent variables are accessible.

\vspace{-0.3em}
\subsection{From sets to elements} \label{sec:ele}
\vspace{-0.3em}

Having established generalized identifiability through set-theoretic indeterminacy, which provides meaningful guarantees when full recovery is out of reach, a natural question arises: can stronger results, such as element identifiability for all latent variables, as targeted by most prior work, be obtained by imposing additional constraints? 

\begin{definition}[\textbf{Element-wise indeterminacy}] \label{def:elem}
    We say there is an \textbf{\textit{element-wise indeterminacy}} between two models $\theta = (g,p_Z)$ and $\hat{\theta} = (\hat{g},p_{\hat{Z}})$, denoted $\theta \sim_{\text{elem}} \hat{\theta}$, if and only if
    \begin{equation*}
    \small
        \hat{Z} = P_\pi \varphi(Z),
            \vspace{-0.1em}
    \end{equation*}
    where $\varphi(Z) = (\varphi_1(Z_1),\ldots,\varphi_{d_z}(Z_{d_z})), \varphi: \mathcal{Z} \implies \hat{\mathcal{Z}}$ is a element-wise diffeomorphism and $P_\pi$ is a permutation matrix corresponding to a $d_z$-permutation $\pi$.
\end{definition}

\begin{definition}[\textbf{Element identifiability} \citep{hyvarinen1999nonlinear}]
    For a model $\theta = (g,p_Z)$, we have \textbf{\textit{element identifiability}}, if and only if, for any other model $\hat{\theta} = (\hat{g},p_{\hat{Z}})$,
    \begin{equation*}
    \small
         \theta \sim_{\text{obs}} \hat{\theta} \implies \theta \sim_{\text{elem}} \hat{\theta}.
    \end{equation*}
\end{definition}

\textbf{Connection to generalized identifiability.} Generalized identifiability (Defn.~\ref{def:gen_iden}) focuses on recovering partial information from subsets of observed variables, while permutation identifiability seeks to recover all latent variables up to element-wise indeterminacy, which is strictly stronger. As a result, achieving permutation identifiability naturally requires stronger assumptions.

Interestingly, this can be a natural consequence of set-theoretic indeterminacy. As discussed in Sec.~\ref{sec:implication}, generalized identifiability guarantees the recovery of atomic regions in the Venn diagram. Therefore, if the Venn diagram is sufficiently rich, meaning that each latent variable corresponds to its own atomic region, we obtain element-wise identifiability directly. Since the Venn diagram is simply a representation of the dependency structure, we now formalize the corresponding structural condition as follows. For each \(X_j \in A\), let \(I_j\) be the index set of latent variables connected to \(X_j\).

\begin{assumption}[\textbf{Sufficient diversity}]\label{assum:structure}
For each latent variable \(Z_i\) (\(i \in [d_z]\)), there exists a set of observed variables \(A\) such that \emph{at least one} of the following three conditions holds:
\vspace{-0.5em}
\begin{enumerate}
    \item There exists \(X_k \in A\) such that $\bigcup_{X_j \in A } I_j = [d_z], I_k \setminus \bigcup_{X_j \in A \setminus \{X_k\}} I_j = i$.
    \vspace{-0.5em}
    \item There exists \(X_k \in A\) such that $\bigcup_{X_j \in A } I_j = [d_z], \left(\bigcap_{X_j \in A \setminus \{X_k\}} I_j\right) \setminus I_k = i$.
    \vspace{-0.5em}
    \item \citep{zhengidentifiability} The intersection of supports satisfies $\bigcap_{X_j \in A} I_j = i.$
    \vspace{-0.5em}
\end{enumerate}
\end{assumption}

\begin{restatable}[\textbf{Element identifiability}]{theorem}{perm}\label{thm:perm}
    Consider any two models $\theta = (g,p_Z)$ and $\hat{\theta} = (\hat{g},p_{\hat{Z}})$ following the process in Sec. \ref{sec:prelim}. Suppose assumptions in Thm. \ref{thm:gen_iden} and Assum. \ref{assum:structure} hold. Then we have identifiability up to element-wise indeterminacy, i.e., $\theta \sim_{\text{obs}} \hat{\theta} \implies \theta \sim_{\text{elem}} \hat{\theta}$.
\end{restatable}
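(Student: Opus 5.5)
Our plan is to derive element identifiability directly from the set-theoretic guarantees of Thm.~\ref{thm:gen_iden} and Thm.~\ref{thm:structure}, together with the invertibility of $g$ and $\hat g$. We write $h = \hat g^{-1}\circ g$, so that $\hat Z = h(Z)$ is a diffeomorphism and its Jacobian $D_Z h$ is invertible everywhere. Under the assumptions of Thm.~\ref{thm:gen_iden}, whenever $\theta \sim_{\text{obs}} \hat\theta$ we have $\theta \sim_{\text{set}} \hat\theta$. By Thm.~\ref{thm:structure}, there is moreover a single permutation $\pi$ aligning the supports of $D_Z g$ and $D_{\hat Z}\hat g$. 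We will use this fixed $\pi$ for every pair $(X_K,X_V)$. This is important: Defn.~\ref{def:set} a priori allows $\pi$ to depend on the pair, so we will first argue, as in the proof of Thm.~\ref{thm:gen_iden}, that the permutation produced there is exactly the one relating the supports.

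The core step is to show that, for each $i\in[d_z]$, the recovered $\hat Z_{\pi(i)}$ does not depend on any $Z_{i'}$ with $i'\neq i$. Equivalently, the $i$-th latent is disentangled from every other latent. We split into the three cases of Assum.~\ref{assum:structure}.

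\textbf{Case 3 (intersection).} This is handled by repeated application of the intersection clause (i) in Defn.~\ref{def:set}, inducting over $A$. Take $X_K=X_{j_1}$ and $X_V=X_{j_2}$. Then $I_{j_1}\cap I_{j_2}$ is separated from its symmetric difference. Next take $X_K$ to be a variable in the running intersection and intersect with the next $I_{j}$. Each step removes the indices that drop out of the intersection, until only $\{i\}$ remains. Indices outside $\bigcup_A I_j$ are separated from $i$ using a pair with $X_V$ chosen to contain them, or by the object-centric clause of Prop.~\ref{prop:set_full}.

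\textbf{Case 1.} Take $X_K=X_k$ and $X_V=A\setminus\{X_k\}$, so that $I_K\setminus I_V=\{i\}$ and $I_V=[d_z]\setminus\{i\}$ because the union covers $[d_z]$. The individual-centric clause (ii) of Prop.~\ref{prop:set_full} then gives that $Z_i$ is not a function of any $\hat Z_{\pi(j)}$ with $j\in I_V=[d_z]\setminus\{i\}$.

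\textbf{Case 2.} First intersect over $A\setminus\{X_k\}$ as in Case 3, which isolates the block $\bigcap_{A\setminus\{X_k\}} I_j$. Then apply the complement and symmetric-difference clauses with $X_V=X_k$ to carve out $i$. Coverage of $[d_z]$ again ensures that every other index falls into one of the separated regions.

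Having shown that each $Z_i$ is separated from all $\hat Z_{\pi(j)}$ with $j\neq i$ (and, by the footnote to Defn.~\ref{def:set}, the reverse direction as well), the invertible Jacobian of $h$ must have the support of a permutation matrix. Concretely, $\partial \hat Z_{\pi(i)}/\partial Z_{j}=0$ for $j\ne i$ on all of $\mathcal Z$. Since $\mathcal Z=\mathbb{R}^{d_z}$ is connected, $\hat Z_{\pi(i)}$ is then a function $\varphi_i(Z_i)$ alone. Invertibility of $h$ forces each $\varphi_i$ to be a diffeomorphism, which gives $\hat Z=P_\pi\varphi(Z)$.

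The main obstacle we expect is the translation between ``$Z_i$ is not a function of $\hat Z_{\pi(j)}$'' and the zero pattern of $D_Z h$, together with the consistency of $\pi$ across different choices of $(X_K,X_V)$. We would first make both rigorous by going back to the identity $D_Z g\,h = D_{\hat Z}\hat g$ used in Thm.~\ref{thm:gen_iden}. There we would show that the support of $H$ satisfies $\operatorname{supp}(H)\subseteq\{(a,b): I$-membership compatible$\}$ for the single $\pi$ of Thm.~\ref{thm:structure}. Each case above then becomes a purely combinatorial statement about rows of $H$ that are forced to be $e_{\pi(i)}$-supported, and the rest of the argument follows.
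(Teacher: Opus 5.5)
\textbf{Gap: the symmetric-difference clause used in Cases~1 and~2.} Your Case~3 is sound. It uses only the intersection clause (i) together with the object-centric or complement clauses, and it is essentially the paper's own Case~3 (the Zheng et al.\ argument that row $i$ of $M_\phi$ has no off-diagonal entry $\pi(r)$), phrased set-theoretically. The gap is in Cases~1 and~2, which rest on the individual-centric clause of Prop.~\ref{prop:set_full}, i.e.\ on the symmetric-difference clause (ii) of Defn.~\ref{def:set}.

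The Jacobian-support argument behind Thm.~\ref{thm:gen_iden} delivers only Eq.~\eqref{eq:supp_belongs} combined with Eq.~\eqref{eq:spar_equiv}: for every observed $X_m$ and every $j\in I_m$, the row $\partial Z_j/\partial\hat Z$ is supported in $\pi(I_m)$. This yields only statements of the form ``$Z_j$ is not a function of $\hat Z_{\pi(r)}$ because some $X_m$ has $j\in I_m$ and $r\notin I_m$.'' That gives (i), (iii) and the object-centric clause. It does not give (ii), where $r\in I_K\cap I_V$. The paper justifies (ii) by saying $Z_r$ is independent of $Z_t$, which is neither assumed nor enough to make a partial derivative vanish.

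The paper's own proof of this theorem fails at the same place. Its Case~2 invokes the individual-centric clause exactly as you do. Its Case~1 concludes $\pi(r)\notin\operatorname{supp}((D_{\hat Z}\hat g)_{v,\cdot})$ for one particular nonzero entry merely because the whole row ${M_\phi}_{i,\cdot}$ is not in that span. Its second subcase posits some $q\in J_{A\setminus k}$ with $i\in I_q$, which contradicts the hypothesis.

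\textbf{Counterexample: the statement fails when conditions are mixed.} This cannot be patched. Take $d_z=d_x=2$ and $g(z)=(z_1+z_1^3+\sin z_2,\ z_2)$, a diffeomorphism of $\mathbb{R}^2$, with $Z$ standard Gaussian. Then $I_1=\{1,2\}$ and $I_2=\{2\}$.
\begin{itemize}
\item Latent~1 satisfies condition~1 with $A=\{X_1,X_2\}$ and $X_k=X_1$. It also satisfies condition~2 with $X_k=X_2$.
\item Latent~2 satisfies condition~3 with $A=\{X_2\}$.
\end{itemize}
Let $\hat Z=(Z_1-Z_2,\,Z_2)$ and $\hat g=g\circ\phi^{-1}$. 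All hypotheses hold:
\begin{itemize}
\item $D_{\hat Z}\hat g$ has exactly the support of $D_Z g$, so the sparsity constraint holds with equality.
\item $H$ is upper triangular, so the support requirement in Assum.~\ref{assum:connection} holds.
\item The rows $(1,1)$ and $(1,0)$ of $D_Z g$ at $z=(0,0)$ and $z=(0,\pi/2)$ give the required linear independence.
\end{itemize}
Yet $Z_1=\hat Z_1+\hat Z_2$, so $\theta\not\sim_{\text{elem}}\hat\theta$. Moreover, with $X_K=X_1$ and $X_V=X_2$, no permutation satisfies both (i) and (ii), so clause (ii) of Thm.~\ref{thm:gen_iden} itself fails.

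\textbf{What survives.} The valid ingredient supports less than Assum.~\ref{assum:structure} claims. Condition~3 at $i$ forces row $i$ of $D_{\hat Z}\phi^{-1}$ to be supported on $\{\pi(i)\}$. Condition~1 at $i$ only forces column $\pi(i)$ to be supported on row $i$, because every $r\neq i$ lies in some $I_j$ with $X_j\in A\setminus\{X_k\}$ and $i\notin I_j$. Row facts for some latents and column facts for others do not combine into a permutation pattern. Your Case~3 argument therefore proves the theorem when every latent satisfies condition~3; this is equivalent to every latent satisfying condition~1. It does not prove the theorem under Assum.~\ref{assum:structure} as stated.
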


\textbf{Generalized structural condition.}
The \textit{sufficient diversity} serves as a generalized condition for element identifiability in fully unsupervised settings. The most closely related work is \citet{zhengidentifiability}, which also derives nonparametric identifiability results based purely on structural assumptions, without relying on auxiliary variables, interventions, or restrictive functional forms. However, their structural sparsity condition aligns exactly with the third clause of \textit{sufficient diversity}, making it strictly stronger. In contrast, our formulation introduces two additional conditions as \textit{alternatives}, expanding the class of admissible structures and offering greater flexibility. We conjecture that sufficient diversity may even be necessary when no distributional or functional form constraints are imposed, as it arises naturally from the structure of atomic regions in the Venn diagram. Since any dependency structure admits such a representation, and atomic regions serve as its minimal elements, our condition may capture the essential structural requirement for element-level recovery.

\textbf{Diversity is not sparsity.}
It is worth emphasizing that our diversity condition is fundamentally different from sparsity assumptions. Diversity does not require the structure to be sparse: it remains valid even in nearly fully connected settings, as long as there is some variation (e.g., even a single differing edge) in the connectivity patterns across variables. By contrast, sparsity-based assumptions strictly enforce sparse structures. For example, the well-known anchor feature assumption \citep{arora2012learning, moran2021identifiable} requires each latent variable to have at least two observed variables that are unique to it, thereby excluding dense structures.

%% file: sections/exp.tex
\vspace{-0.3em}
\section{Experiment}
\vspace{-0.3em}

In this section, we provide empirical support for our results in both synthetic and real-world settings. Due to page limits, \textbf{additional experimental results are deferred to Appendix~\ref{appx:exp}}, including (1) \textit{comparisons of more Jacobian/Hessian penalties} (e.g., \citep{wei2021orthogonal, peebles2020hessian}), (2) \textit{analyses of regularization weights}, and (3) \textit{further visual results on synthetic and real data}.

\vspace{-0.3em}
\subsection{Synthetic experiments}
\vspace{-0.3em}

\begin{wrapfigure}{r}{0.4\textwidth}
  \centering
  \vspace{-10pt}
  \includegraphics[width=0.38\textwidth]{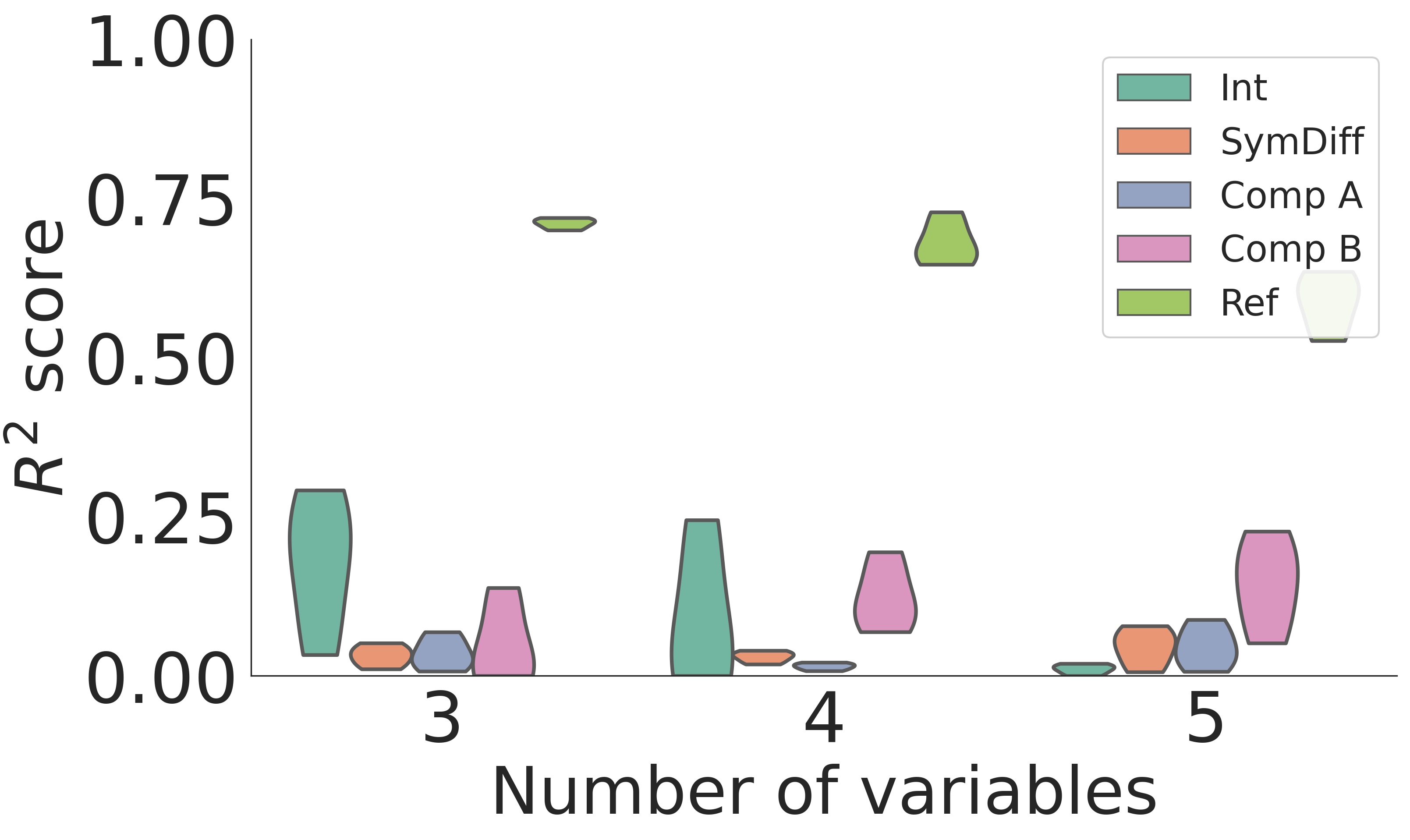}
  \vspace{-5pt}
  \caption{$R^2$ in simulation.}
  \label{fig:r2}
  \vspace{-20pt}
\end{wrapfigure}

\textbf{Setup.} We follow the data generation process in Sec. \ref{sec:prelim}. We employ the variational autoencoder as our backbone model with a dependency sparsity regularization in the objective function as:
\begin{equation*}
\small
\mathcal{L} = \underbrace{\mathbb{E}_{q(Z|X)}[\ln p(X|Z)] - \beta D_{KL}(q(Z|X)||p(Z))}_{\text{Evidence Lower Bound}} + \alpha \left\lVert D_{\hat{z}} \hat{g} \right\rVert_0,
\end{equation*}
where $D_{KL}$ is the Kullback–Leibler divergence, $q(Z|X)$ the variational posterior, $p(Z)$ the prior, $p(X|Z)$ the likelihood, and $\alpha, \beta$ regularization weights. We use $10,000$ samples and set $\alpha = \beta = 0.05$ for all experiments, and all generation processes are nonlinear, implemented by MLPs with Leaky ReLU.


\begin{wrapfigure}{r}{0.4\textwidth}
  \centering
  \vspace{-10pt}
  \includegraphics[width=0.38\textwidth]{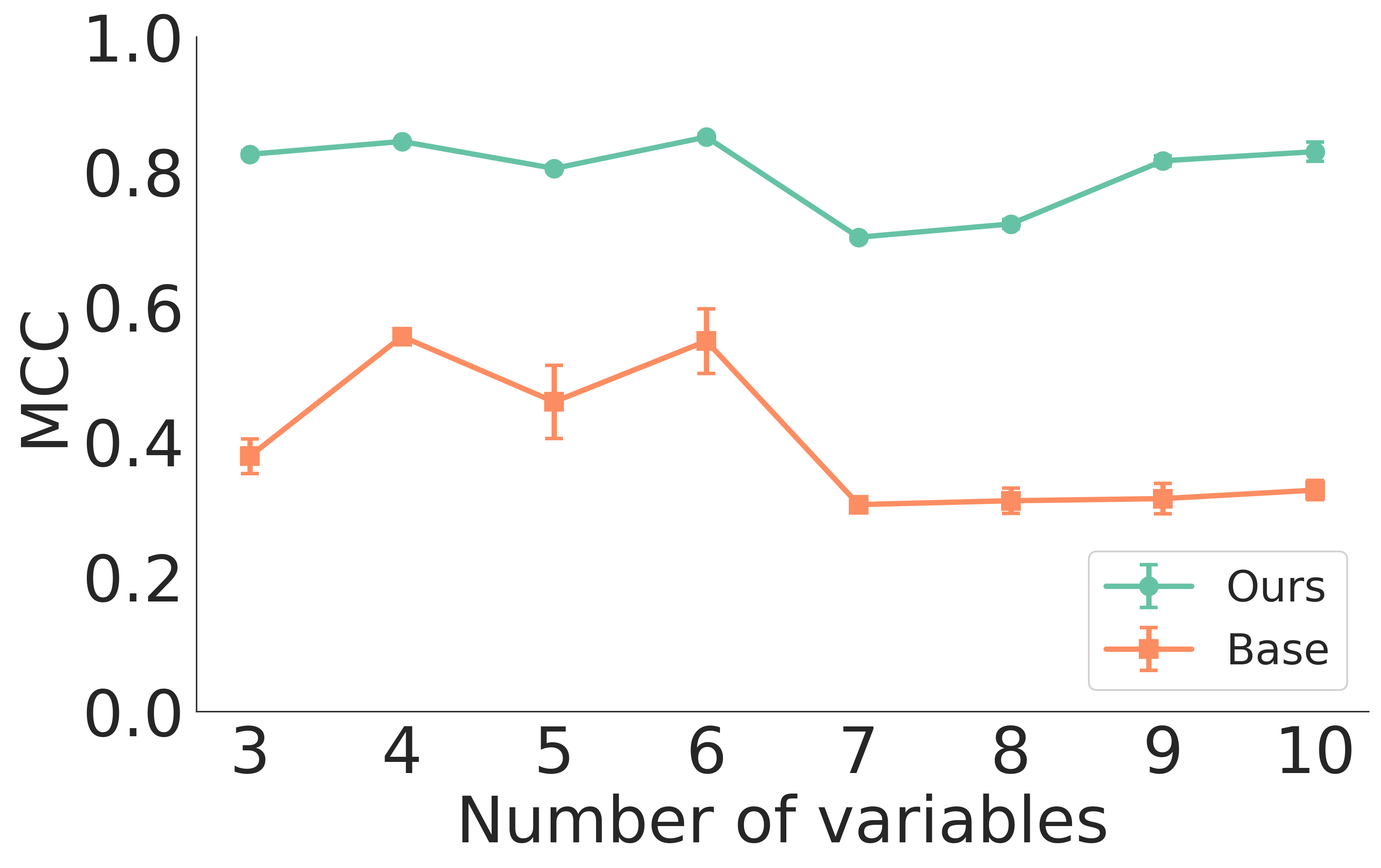}
  \vspace{-5pt}
  \caption{$MCC$ in simulation.}
  \label{fig:mcc}
  \vspace{-10pt}
\end{wrapfigure}

\textbf{Generalized Identifiability.} We begin by evaluating generalized identifiability across groups of observed variables. We generate datasets with dimensionality in $\{3, 4, 5\}$ and split the observed variables into two groups, $X_K$ and $X_V$. For each dataset, we compute the $R^2$ score, lower means more disentangled, between: (1) \textit{Int}, $I_K \cap I_V$ and $I_K \Delta I_V$; (2) \textit{SymDiff}, $I_K \Delta I_V$ and $I_K \cap I_V$; and (3) \textit{Comp A} and \textit{Comp B}, both directions between $I_K \setminus I_V$ and $I_V \setminus I_K$. We also include \textit{Ref}, the $R^2$ between $Z$ and $\hat{Z}$, as a baseline indicating the expected level of $R^2$ for entangled variables. As shown in Fig.~\ref{fig:r2}, all disentanglement conditions implied by set-theoretic indeterminacy (Defn.~\ref{def:set}) are satisfied: the $R^2$ between structurally disjoint components is consistently much lower than \textit{Ref}, supporting the validity of generalized identifiability.

\textbf{Element Identifiability.} We evaluate whether comparing multiple variable pairs enables recovery of latent variables up to element-wise indeterminacy. We construct datasets with varying dimensions and structures that either satisfy Sufficient Diversity (Assum.~\ref{assum:structure}) (\textit{Ours}) or violate it through fully dense dependencies (\textit{Base}). Following prior work \citep{hyvarinen2024identifiability}, we use the mean correlation coefficient (MCC) between estimated and ground-truth latent variables as the evaluation metric. As shown in Fig.~\ref{fig:mcc}, only datasets satisfying the structural condition achieve high MCC, confirming that element-wise identifiability holds under our assumptions.

\vspace{-0.3em}
\subsection{Visual Experiments}
\vspace{-0.3em}


\textbf{Setup.} Following the literature, we evaluate identification in more complex settings by learning latent variables as generative factors. Specifically, we follow the setting of \citep{yang2024diffusion} and use three standard benchmark datasets of disentangled representation learning: Cars3D \citep{reed2015deep}, Shapes3D \citep{kim2018disentangling}, and MPI3D \citep{gondal2019transfer}, which are benchmark datasets with known generative factors such as object color, shape, scale, orientation, and viewpoint, ranging from synthetic renderings to real-world images.


To evaluate the effectiveness of the proposed sparsity loss, we incorporate it into three powerful disentangled representation learning methods based on mainstream generative models: Variational Autoencoders (VAE), Generative Adversarial Networks (GAN), and Diffusion Models. These methods correspond to FactorVAE \citep{kim2018disentangling}, DisCo \citep{ren2021learning}, and EncDiff \citep{yang2024diffusion}, respectively. We consider two types of baselines: 1) the original methods, i.e., FactorVAE, DisCo, and EncDiff, and 2) versions of these methods that incorporate L1 regularization on $Z$ (latent sparsity). In contrast, our approach applies an L1 regularization on Jacobian (dependency sparsity). Following standard practice, we use FactorVAE score \citep{kim2018disentangling} and the DCI Disentanglement score \citep{eastwood2018framework} as evaluation metrics. We repeat each method over three random seeds. Please refer to Appx. \ref{appx:exp} for more details on setups.

\textbf{Dependency sparsity in the literature.} Notably, dependency sparsity has been widely used as a simple and standard regularization across diverse settings, from disentanglement to LLMs \citep{rhodes2021local, zhengidentifiability, farnik2025jacobian}, although a general identifiability theory is still lacking. Thus, its empirical effectiveness is already well established, and our experiments aim to provide further supporting evidence.



\textbf{Latent or dependency sparsity?}
Table~\ref{tab:disentangle_results} shows that across most datasets and backbone methods, introducing the proposed dependency sparsity consistently helps the understanding of the hidden world. Notably, these generative models often benefit more from dependency sparsity than from latent sparsity. This is particularly interesting given the widespread use of sparse latent regularization in mechanistic interpretability (e.g., sparse autoencoders \citep{cunningham2023sparse}). Our results highlight not only the advantage of dependency sparsity, but also lend insight to recent concerns about the limitations of latent sparsity raised in the interpretability literature, such as feature absorption, linear constraints, and high dimensionality \citep{sharkey2025open}.

\begin{table*}[t]
\centering
\small
\renewcommand\arraystretch{0.4} 
\setlength{\tabcolsep}{4pt}
\resizebox{\textwidth}{!}{%
\begin{tabular}{l|cc|cc|cc}
\toprule
\multirow{2}{*}{\centering\textbf{Method}} &
\multicolumn{2}{c|}{\textbf{Shapes3D}} &
\multicolumn{2}{c|}{\textbf{Cars3D}} &
\multicolumn{2}{c}{\textbf{MPI3D}} \\
\cmidrule(lr){2-3}\cmidrule(lr){4-5}\cmidrule(lr){6-7}
& \textbf{FactorVAE} $\uparrow$ & \textbf{DCI} $\uparrow$
& \textbf{FactorVAE} $\uparrow$ & \textbf{DCI} $\uparrow$
& \textbf{FactorVAE} $\uparrow$ & \textbf{DCI} $\uparrow$ \\
\midrule
\multicolumn{7}{c}{\textit{VAE-based}}\\
\midrule
FactorVAE \citep{kim2018disentangling}                             & 0.833 $\pm$ 0.025 & 0.484 $\pm$ 0.120 & 0.708 $\pm$ 0.026 & 0.135 $\pm$ 0.030 & 0.599 $\pm$ 0.064 & 0.345 $\pm$ 0.047 \\
FactorVAE + Latent Sparsity         & 0.837 $\pm$ 0.069 & 0.477 $\pm$ 0.152 & 0.501 $\pm$ 0.434 & 0.113 $\pm$ 0.069 & 0.440 $\pm$ 0.065 & 0.325 $\pm$ 0.028 \\
FactorVAE + Dependency Sparsity       & \textbf{0.871 $\pm$ 0.053} & \textbf{0.575 $\pm$ 0.032} & \textbf{0.752 $\pm$ 0.040} & \textbf{0.144 $\pm$ 0.053} & \textbf{0.639 $\pm$ 0.084} & \textbf{0.384 $\pm$ 0.031} \\
\midrule
\multicolumn{7}{c}{\textit{Diffusion-based}}\\
\midrule
EncDiff \citep{yang2024diffusion}                              & 0.9999 $\pm$ 0.0001 & 0.901 $\pm$ 0.050 & \textbf{0.779 $\pm$ 0.060} & 0.250 $\pm$ 0.020 & 0.868 $\pm$ 0.033 & 0.676 $\pm$ 0.018 \\
EncDiff + Latent Sparsity           & 0.967 $\pm$ 0.042 & 0.891 $\pm$ 0.057 & 0.729 $\pm$ 0.003 & 0.241 $\pm$ 0.016 & 0.879 $\pm$ 0.015 & \textbf{0.684 $\pm$ 0.020} \\
EncDiff + Dependency Sparsity         & \textbf{1.0000 $\pm$ 0.0000} & \textbf{0.947 $\pm$ 0.005} & 0.756 $\pm$ 0.041 & \textbf{0.256 $\pm$ 0.011} & \textbf{0.881 $\pm$ 0.024} & 0.667 $\pm$ 0.047 \\
\midrule
\multicolumn{7}{c}{\textit{GAN-based}}\\
\midrule
DisCo \citep{ren2021learning}                                 & 0.852 $\pm$ 0.037 & 0.710 $\pm$ 0.020 & 0.727 $\pm$ 0.106 & 0.319 $\pm$ 0.031 & 0.396 $\pm$ 0.023 & 0.306 $\pm$ 0.079 \\
DisCo + Latent Sparsity             & 0.864 $\pm$ 0.007 & 0.707 $\pm$ 0.024 & 0.761 $\pm$ 0.148 & 0.294 $\pm$ 0.023 & 0.308 $\pm$ 0.031 & 0.314 $\pm$ 0.050 \\
DisCo + Dependency Sparsity           & \textbf{0.868 $\pm$ 0.017} & \textbf{0.712 $\pm$ 0.018} & \textbf{0.789 $\pm$ 0.029} & \textbf{0.320 $\pm$ 0.003} & \textbf{0.410 $\pm$ 0.122} & \textbf{0.324 $\pm$ 0.059} \\
\bottomrule
\end{tabular}}
\caption{Comparison of disentanglement on FactorVAE score and DCI (mean$\pm$std, higher is better). Bold numbers denote the best value \emph{within each model family} for a given dataset/metric.}
\label{tab:disentangle_results}
\vspace{-5mm}
\end{table*}


%% file: sections/conclusion.tex
\vspace{-3mm}
\section{Conclusion}
\vspace{-2mm}
We introduce \textit{diverse dictionary learning} to investigate which aspects of the hidden world can be recovered under basic conditions, and which inductive biases may be universally beneficial during estimation. Our guarantees, grounded in set algebra, offer a complementary local view to prior results based on global assumptions, and also unify existing structural conditions for full identifiability. For future work, it is worth exploring generalized identifiability in foundation models. Current models are largely driven by empirical insights, and inductive biases inspired by identifiability, which have been overlooked, may offer fresh directions for breakthroughs. With massive data and computation available, asymptotic guarantees are becoming increasingly relevant, making identifiability practically significant. A deeper investigation along this line remains an open limitation of our work.



%% file: sections/appx_proof.tex
\section{Proofs}\label{appx:proofs}

\begin{table}[t]
\centering
\footnotesize
\renewcommand{\arraystretch}{1.2}
\begin{tabular}{l p{0.65\linewidth}}
\toprule
\textbf{Symbol} & \textbf{Description} \\
\midrule
$X = (X_1,\dots,X_{d_x}) \in \mathbb{R}^{d_x}$ & Observed variables (data space) \\
$Z = (Z_1,\dots,Z_{d_z}) \in \mathbb{R}^{d_z}$ & Latent variables (hidden space) \\
$g : \mathbb{R}^{d_z} \to \mathbb{R}^{d_x}$ & Generative map, diffeomorphism onto its image \\
$\mathrm{supp}(M;\Theta)$ & Support of a matrix-valued function $M:\Theta \to \mathbb{R}^{m \times n}$ \\
$S = \mathrm{supp}(D_z g; \mathcal{Z})$ & Dependency structure: support of Jacobian between $Z$ and $X$ \\
$\theta = (g,p_Z)$ & Model consisting of generative map and latent distribution \\
$\theta \sim_{\mathrm{obs}} \hat\theta$ & Observational equivalence (same induced distribution on $X$) \\
$\theta \sim_{\mathrm{set}} \hat\theta$ & Set-theoretic indeterminacy (intersection, symmetric difference, complement disentangled) \\
$I_S \subseteq [d_z]$ & Latent index set associated with observed variable set $X_S$ \\
$I_K \cap I_V$ & Intersection of latent supports (shared factors) \\
$I_K \Delta I_V$ & Symmetric difference of latent supports (unique factors) \\
$I_K \setminus I_V,\; I_V \setminus I_K$ & Complements (exclusive latent components) \\
Atomic region & Minimal block in Venn diagram defined by intersections and complements of latent supports \\
$\theta \sim_{\mathrm{elem}} \hat\theta$ & Element-wise indeterminacy (permutation + invertible reparametrization) \\
\bottomrule
\end{tabular}
\caption{Notation used throughout the paper.}
\label{tab:notation}
\end{table}

\subsection{Proof of Proposition \ref{prop:set_full}}\label{appx:proof_prop_set_full}

\setfull*

\begin{proof}
For $\theta = (g,p_Z)$ and $\hat{\theta} = (\hat{g},p_{\hat{Z}})$, since $\theta \sim_{\text{set}} \hat{\theta}$, for any two sets of observed variables $X_K$ and $X_V$, and their corresponding latent index sets $I_K$ and $I_V$, there exists a permutation $\pi$ over $\{1, \dots, d_z\}$ such that $Z_i$ is not a function of $\hat{Z}_{\pi(j)}$ for any $(i,j)$ satisfying at least one of the following conditions:
\begin{enumerate}[label=(\roman*),ref=\roman*]
    \item (\textit{Intersection}) $i \in I_K \cap I_V$, $j \in I_K \Delta I_V$;
    \item (\textit{Symmetric difference}) $i \in I_K \Delta I_V$, $j \in I_K \cap I_V$;
    \item (\textit{Complement}) $i \in I_K \setminus I_V$, $j \in I_V \setminus I_K$, or $i \in I_V \setminus I_K$, $j \in I_K \setminus I_V$.
\end{enumerate}
Our goal is to prove that, the same holds for all $(i,j)$ satisfying at least one of the following conditions:
\begin{enumerate}[label=(\roman*),ref=\roman*]
    \item (\textit{Object-centric disentanglement}) $i \in I_K$, $j \in I_V \setminus I_K$ or $i \in I_V$, $j \in I_K \setminus I_V$;
    \item (\textit{Individual-centric disentanglement}) $i \in I_K \setminus I_V$, $j \in I_V$, or $i \in I_V \setminus I_K$, $j \in I_K$;
    \item (\textit{Shared-centric disentanglement}) $i \in I_K \cap I_V$, $j \in I_K \Delta I_V$.
\end{enumerate}
Let us start with the first case. If $i \in I_K$, it is either the case $i \in I_K \cap I_V$ or $i \in I_K \setminus I_V$. For $i \in I_K \cap I_V$, according to the case of \textit{Intersection} in set-theoretic indeterminacy, we have
\begin{equation} \label{eq:partial_zero}
    \frac{\partial Z_i}{\partial \hat{Z}_{\pi(j)}} = 0,
\end{equation}
for any $j \in I_K \Delta I_V$. Similarly, for $i \in I_K \setminus I_V$, we also have Eq. \eqref{eq:partial_zero} for $j \in I_V \setminus I_K$. Combining these together, Eq. \eqref{eq:partial_zero} must holds for any $i \in I_K$ and $j \in I_V \setminus I_K$. The similar derivation holds for any $i \in I_V$ and $j \in I_K \setminus I_V$. Thus, the first case holds.

Then we consider the second case. If $i \in I_K \setminus I_V$, then according to the case of $\textit{symmetric difference}$ in set-theoretic indeterminacy, we have Eq. \eqref{eq:partial_zero} holds for $j \in I_K \cap I_V$.

Moreover, according to the case of \textit{complement} in set-theoretic indeterminacy, we have Eq. \eqref{eq:partial_zero} holds for $j \in I_V \setminus I_K$. Note that there is
\begin{equation}
    (I_V \setminus I_K) \cup (I_K \cap I_V) = I_V.
\end{equation}
Thus, for any $i \in I_K \setminus I_V$, we have Eq. \eqref{eq:partial_zero} holds for any $j \in I_V$. The similar derivation holds for any $i \in I_V \setminus I_K$ and $j \in I_K$. Thus, the second case holds.

The third case is identical to the case of \textit{intersection} in set-theoretic indeterminacy. Thus, for $\theta = (g,p_Z)$ and $\hat{\theta} = (\hat{g},p_{\hat{Z}})$, $\theta \sim_{\text{set}} \hat{\theta}$ implies our goals.
\end{proof}

\subsection{Proof of Theorem \ref{thm:gen_iden}}\label{appx:proof_thm_gen_iden}

\set*

\begin{proof}
Since $\theta \sim_{\text{obs}} \hat{\theta}$, by the change-of-variable formula there must be
\begin{equation}
    \hat{Z} = \hat{g}^{-1}\circ g(Z) = \phi(Z),
\end{equation}
where $\phi = \hat{g}^{-1}\circ g$ is an invertible function and thus $\phi^{-1}$ exists. Therefore, according to the chain rule, we have
\begin{equation} \label{eq:chain}
    D_{\hat{Z}} \hat{g} =  D_Z g D_{\hat{Z}} \phi^{-1}.
\end{equation}
For each $i \in [d_x]$, consider a set \( S_i \) of \( \|{\left(D_Z g\right)}_{i,\cdot}\|_0 \) distinct points and the corresponding Jacobians as follows
\begin{equation} \label{eq:jacobians}
       \left( \frac{\partial X_i}{\partial Z_1}, \frac{\partial X_i}{\partial Z_2}, \ldots, \frac{\partial X_i}{\partial Z_{d_z}} \right) \bigg|_{(z) = (z^{(k)})}, k \in S_i.
\end{equation}
According to Assumption \ref{assum:connection}, all vectors in Eq. \eqref{eq:jacobians} are linearly independent. 

Let us construct a matrix $M_\phi$. Since all vectors in Eq. \eqref{eq:jacobians} are linearly independent, for any $j \in \operatorname{supp}((D_Z g)_{i,\cdot})$, we have
\begin{equation} \label{eq:span}
    {M_\phi}_{j,\cdot} = \sum_{k \in S_i} \beta_k (D_Z g(z^{(k)}))_{i,\cdot} M_\phi,
\end{equation}
where $\beta_k, \forall k \in S_i$ denote coefficients, and $M_{\phi}$ denotes a matrix.

We wish to construct a constant matrix \(M_\phi\) satisfying
\begin{equation} \label{eq:relation_new}
    \sum_{k \in S_i} \beta_k\, (D_Z g(z^{(k)}))_{i,\cdot}\, M_\phi \in \operatorname{span}\{ e_j : j \in \operatorname{supp}((D_{\hat{Z}}\hat{g})_{i,\cdot}) \},
\end{equation}
for each \(i\in [d_x]\), while ensuring that
\begin{equation} \label{eq:Mh_support}
    \operatorname{supp}(M_\phi) = \operatorname{supp}( D_{\hat{Z}} \phi^{-1}),
\end{equation}

According to Assumption \ref{assum:connection}, we have
\begin{equation}
    \operatorname{supp}(D_Z g(z^{(k)})  M_\phi)_{i,\cdot} \subseteq \operatorname{supp}(D_{\hat{Z}}\hat{g}(\hat{z}^{(k)}))_{i,\cdot}, \forall k \in S_i.
\end{equation}
Therefore, there must be
\begin{equation}
   D_Z g(z^{(k)}))_{i,\cdot}\, M_\phi \in \operatorname{span}\{ e_j : j \in \operatorname{supp}((D_{\hat{Z}}\hat{g})_{i,\cdot}) \},
\end{equation}
which implies
\begin{equation}
    \sum_{k \in S_i} \beta_k\, (D_Z g(z^{(k)}))_{i,\cdot}\, M_\phi \in \operatorname{span}\{ e_j : j \in \operatorname{supp}((D_{\hat{Z}}\hat{g})_{i,\cdot}) \}.
\end{equation}
Equivalently, we have
\begin{equation}\label{eq:supp_belongs}
     {M_\phi}_{j,\cdot} \in \operatorname{span}\{ e_k : k \in \operatorname{supp}((D_{\hat{Z}} \hat{g})_{i,\cdot}) \}, \forall j \in \operatorname{supp}((D_Z g)_{i,\cdot}).
\end{equation}


Define a bipartite graph \(G=(R,C,E)\) where \(R=C=\{1,2,\dots,d_z\}\) and an edge exists between \(j\in R\) and \(k\in C\) if and only if \(D_{\hat{Z}} \phi^{-1}_{j,k}\neq 0\). 

Since \(D_{\hat{Z}} \phi^{-1}\) is invertible, its rows are linearly independent, so for every subset \(S\subseteq R\), the corresponding rows have a nonzero determinant, implying that 
\begin{equation}
|\{ k\in C \mid \exists\, j\in S,\, D_{\hat{Z}} \phi^{-1}_{j,k}\neq 0\}| \ge |S|.
\end{equation}
By Hall's marriage theorem, there exists a perfect matching between \(R\) and \(C\). This matching corresponds to a permutation \(\pi \in S_n\) such that
\begin{equation}
(D_{\hat{Z}} \phi^{-1})_{j,\pi(j)}\neq 0, \forall j\in \{1,2,\dots,n\}.
\end{equation}
In particular, for every \(j \in \operatorname{supp}((D_Z g)_{i,\cdot}) \subseteq \{1,2,\dots,n\}\), we have
\begin{equation}
(D_{\hat{Z}} \phi^{-1})_{j,\pi(j)} \neq 0.
\end{equation}
Because $\operatorname{supp}(M_\phi) = \operatorname{supp}( D_{\hat{Z}} \phi^{-1})$, this implies
\begin{equation}
    {M_\phi}_{j,\pi(j)}\neq 0, \forall j \in \operatorname{supp}((D_Z g)_{i,\cdot}).
\end{equation}
Further incorporating Eq. \eqref{eq:supp_belongs}, it follows that
\begin{equation} \label{eq:after_hall}
    \pi(j) \in \operatorname{span}\{ e_k : k \in \operatorname{supp}((D_{\hat{Z}} \hat{g})_{i,\cdot}) \}, \forall j \in \operatorname{supp}((D_Z g)_{i,\cdot}).
\end{equation}
Therefore, for any non-zero element in $D_z g$, there always exists a corresponding non-zero element in $D_{\hat{z}} \hat{g}$, with the relations on their indices as follows
\begin{equation} \label{eq:spar_imply}
    (D_z g)_{i,j} \neq 0 \implies (D_{\hat{z}} \hat{g})_{i,\pi(j)} \neq 0.
\end{equation}
Furthermore, because of the assumption that 
\begin{equation}
    \|D_{\hat{Z}} \hat{g} \|_0 \leq \|D_Z g \|_0,
\end{equation}
Eq. \eqref{eq:spar_imply} can be further restricted to an equivalence between the sparsity patterns, i.e.,
\begin{equation} \label{eq:spar_equiv}
    (D_z g)_{i,j} \neq 0 \Longleftrightarrow (D_{\hat{z}} \hat{g})_{i,\pi(j)} \neq 0
\end{equation}

We then consider the following two cases for the set-theoretic indeterminacy. Specifically, for any two sets of observed variables $X_K$ and $X_V$ and the index sets of their latent variables, $I_K$ and $I_V$, $K \neq V$, we consider the following cases:
\begin{enumerate}[label=(\roman*),ref=\roman*]
    \item (\textit{Intersection}) $i \in I_K \cap I_V$, $j \in I_K \Delta I_V$;
    \item (\textit{Symmetric difference}) $i \in I_K \Delta I_V$, $j \in I_K \cap I_V$;
    \item (\textit{Complement}) $i \in I_K \setminus I_V$, $j \in I_V \setminus I_K$, or $i \in I_V \setminus I_K$, $j \in I_K \setminus I_V$.
\end{enumerate}

Let us start from the first case, where $t \in I_K \cap I_V$, $r \in I_K \Delta I_V$. Denote the index sets of $X_K$ and $X_V$ as $J_K$ and $J_V$. Then, there exists $k \in J_K$ such that
\begin{equation}
    t \in {\operatorname{supp}( D_{Z} g)}_{k,\cdot}.
\end{equation}

This further implies the following relation based on Eq. \eqref{eq:supp_belongs}
\begin{equation}\label{eq:intersec_k_1}
     {M_\phi}_{t,\cdot} \in \operatorname{span}\{ e_{k'} : k' \in \operatorname{supp}((D_{\hat{Z}} \hat{g})_{k,\cdot}) \}.
\end{equation}
Similarly, there exists $v \in J_V$ such that
\begin{equation}
    t \in {\operatorname{supp}( D_{Z} g)}_{v,\cdot},
\end{equation}
which further implies
\begin{equation}\label{eq:intersec_v_1}
     {M_\phi}_{t,\cdot} \in \operatorname{span}\{ e_k' : k' \in \operatorname{supp}((D_{\hat{Z}} \hat{g})_{v,\cdot}) \}.
\end{equation}
For $r \in I_K \Delta I_V$, suppose
\begin{equation}
    {M_{\phi}}_{t, \pi(r)} \neq 0.
\end{equation}
According to Eqs. \eqref{eq:intersec_k_1} and \eqref{eq:intersec_v_1}, there must be
\begin{align}
    \pi(r) \in {\operatorname{supp}( D_{\hat{Z}} \hat{g})}_{k,\cdot}, \\
    \pi(r) \in {\operatorname{supp}( D_{\hat{Z}} \hat{g})}_{v,\cdot}.
\end{align}
Together with Eq. \eqref{eq:spar_equiv}, these further imply
\begin{align}
    r \in {\operatorname{supp}( D_{Z} g)}_{k,\cdot}, \\
    r \in {\operatorname{supp}( D_{Z} g)}_{v,\cdot}.
\end{align}
This leads to 
\begin{equation}
    r \in I_K \cap I_V,
\end{equation}
which contradict $r \in I_K \Delta I_V$. Therefore, there must be
\begin{equation}
    {M_{\phi}}_{t, \pi(r)} = 0.
\end{equation}
Since $M_{\phi}$ is the support of $D_{\hat{Z}} \phi^{-1}$, this implies that, for $t \in I_K \cap I_V$ and $r \in I_K \Delta I_V$, we have
\begin{equation} \label{eq:disentangle_inter}
    \frac{\partial Z_t}{\partial \hat{Z}_{\pi(r)}} = 0.
\end{equation}

Then we consider the case where $t \in I_K \cap I_V$ and $r \in I \setminus (I_K\cup I_V)$. Suppose
\begin{equation}
    {M_{\phi}}_{t, \pi(r)} \neq 0.
\end{equation}
According to Eq. \eqref{eq:intersec_k_1}, there must be
\begin{align}
    \pi(r) \in {\operatorname{supp}( D_{\hat{Z}} \hat{g})}_{k,\cdot}.
\end{align}
Together with Eq. \eqref{eq:spar_equiv}, these further imply
\begin{align}
    r \in {\operatorname{supp}( D_{Z} g)}_{k,\cdot}.
\end{align}
This leads to 
\begin{equation}
    r \in I_K ,
\end{equation}
which contradict $r \in I \setminus (I_K\cup I_V)$. Therefore, there must be
\begin{equation} \label{eq:disentangle_inter_2}
    {M_{\phi}}_{t, \pi(r)} = 0,
\end{equation}
where $t \in I_K \cap I_V$ and $r \in I \setminus (I_K\cup I_V)$.

Therefore, according to Eqs. \eqref{eq:disentangle_inter} and \eqref{eq:disentangle_inter_2} and the invertibility of $\phi$, for $t \in I_K \cap I_V$, $Z_t$ has to depend only on $\hat{Z}_{\pi(t)}$ and not other variables. Therefore, there exists an invertible function $h$ s.t. $Z_t = h(\hat{Z}_{\pi(t)})$.

Further consider the setting where $r \in I_K \Delta I_V$. Since $t \in I_K \cap I_V$ and $(I_K \Delta I_V) \cap (I_K \cap I_V) = \emptyset $, $Z_r$ is independent of $Z_t = h(\hat{Z}_{\pi(t)})$. Therefore, $Z_r$ does not depend on $\hat{Z}_{\pi(t)}$ and thus
\begin{equation}
    \frac{\partial Z_r}{\partial \hat{Z}_{\pi(t)}} = 0,
\end{equation}
which is the second case.

Then we consider the third case where $t \in I_K \setminus I_V$ and $r \in I_V \setminus I_K$. 
If $t \in I_K \setminus I_V$, there exists $k \in J_K$ such that
\begin{equation}
    t \in {\operatorname{supp}( D_{Z} g)}_{k,\cdot}.
\end{equation}
Then there is
\begin{equation}
    {M_\phi}_{t,\cdot} \in \operatorname{span}\{ e_{k'} : k' \in \operatorname{supp}((D_{\hat{Z}} \hat{g})_{k,\cdot}) \}.
\end{equation}
For $r \in I_V \setminus I_K$, suppose
\begin{equation}
    {M_{\phi}}_{t, \pi(r)} \neq 0.
\end{equation}
Then we have
\begin{equation}
    \pi(r) \in {\operatorname{supp}( D_{\hat{Z}} \hat{g})}_{k,\cdot},
\end{equation}
which follows
\begin{equation}
    r \in {\operatorname{supp}( D_{Z} g)}_{k,\cdot}.
\end{equation}
This is a contradiction since $r \in I_V \setminus I_K$. Thus, we can also prove that, for the third case, where $t \in I_V \setminus I_K$ and $r \in I_K \setminus I_V$, there must be
\begin{equation}
    \frac{\partial Z_t}{\partial \hat{Z}_{\pi(r)}} = 0.
\end{equation}
This concludes the proof.
\end{proof}

\subsection{Proof of Theorem \ref{thm:structure}}\label{appx:proof_thm_struc}

\structure*

\begin{proof}

Since $\theta \sim_{\text{obs}} \hat{\theta}$, by considering $\phi = \hat{g}^{-1}\circ g$ and the change-of-variable formula, we have
\begin{equation}
    \hat{Z} = \phi(Z),
\end{equation}
where $\phi$ is an invertible function and thus $\phi^{-1}$ exists. Therefore, according to the chain rule, we have
\begin{equation} \label{eq:chain_s}
    D_{\hat{Z}} \hat{g} =  D_Z g D_{\hat{Z}} \phi^{-1}.
\end{equation}
For each $i \in [d_x]$, consider a set \( S_i \) of \( \|{\left(D_Z g\right)}_{i,\cdot}\|_0 \) distinct points and the corresponding Jacobians as follows
\begin{equation} \label{eq:jacobians_s}
       \left( \frac{\partial X_i}{\partial Z_1}, \frac{\partial X_i}{\partial Z_2}, \ldots, \frac{\partial X_i}{\partial Z_{d_z}} \right) \bigg|_{(z) = (z^{(k)})}, k \in S_i.
\end{equation}
According to Assumption \ref{assum:connection}, all vectors in Eq. \eqref{eq:jacobians_s} are linearly independent. 

Let us construct a matrix $M_\phi$. Since all vectors in Eq. \eqref{eq:jacobians_s} are linearly independent, for any $j \in \operatorname{supp}((D_Z g)_{i,\cdot})$, we have
\begin{equation} \label{eq:span_s}
    {M_\phi}_{j,\cdot} = \sum_{k \in S_i} \beta_k (D_Z g(z^{(k)}))_{i,\cdot} M_\phi,
\end{equation}
where $\beta_k, \forall k \in S_i$ denote coefficients.

We wish to construct a constant matrix \(M_\phi\) satisfying
\begin{equation} \label{eq:relation_new_s}
    \sum_{k \in S_i} \beta_k\, (D_Z g(z^{(k)}))_{i,\cdot}\, M_\phi \in \operatorname{span}\{ e_j : j \in \operatorname{supp}((D_{\hat{Z}}\hat{g})_{i,\cdot}) \},
\end{equation}
for each \(i\in [d_x]\), while ensuring that
\begin{equation} \label{eq:Mh_support_s}
    \operatorname{supp}(M_\phi) = \operatorname{supp}( D_{\hat{Z}} \phi^{-1}),
\end{equation}

According to Assumption \ref{assum:connection}, we have
\begin{equation}
    \operatorname{supp}(D_Z g(z^{(k)})  M_\phi)_{i,\cdot} \subseteq \operatorname{supp}(D_{\hat{Z}}\hat{g}(\hat{z}^{(k)}))_{i,\cdot}, \forall k \in S_i.
\end{equation}
Therefore, there must be
\begin{equation}
   D_Z g(z^{(k)}))_{i,\cdot}\, M_\phi \in \operatorname{span}\{ e_j : j \in \operatorname{supp}((D_{\hat{Z}}\hat{g})_{i,\cdot}) \},
\end{equation}
which implies
\begin{equation}
    \sum_{k \in S_i} \beta_k\, (D_Z g(z^{(k)}))_{i,\cdot}\, M_\phi \in \operatorname{span}\{ e_j : j \in \operatorname{supp}((D_{\hat{Z}}\hat{g})_{i,\cdot}) \}.
\end{equation}
Equivalently, we have
\begin{equation}\label{eq:supp_belongs_s}
     {M_\phi}_{j,\cdot} \in \operatorname{span}\{ e_k : k \in \operatorname{supp}((D_{\hat{Z}} \hat{g})_{i,\cdot}) \}, \forall j \in \operatorname{supp}((D_Z g)_{i,\cdot}).
\end{equation}

Define a bipartite graph \(G=(R,C,E)\) where \(R=C=\{1,2,\dots,d_z\}\) and an edge exists between \(j\in R\) and \(k\in C\) if and only if \(D_{\hat{Z}} \phi^{-1}_{j,k}\neq 0\). 

Since \(D_{\hat{Z}} \phi^{-1}\) is invertible, its rows are linearly independent, so for every subset \(S\subseteq R\), the corresponding rows have a nonzero determinant, implying that 
\begin{equation}
|\{ k\in C \mid \exists\, j\in S,\, D_{\hat{Z}} \phi^{-1}_{j,k}\neq 0\}| \ge |S|.
\end{equation}
By Hall's marriage theorem, there exists a perfect matching between \(R\) and \(C\). This matching corresponds to a permutation \(\pi \in S_n\) such that
\begin{equation}
(D_{\hat{Z}} \phi^{-1})_{j,\pi(j)}\neq 0, \forall j\in \{1,2,\dots,n\}.
\end{equation}
In particular, for every \(j \in \operatorname{supp}((D_Z g)_{i,\cdot}) \subseteq \{1,2,\dots,n\}\), we have
\begin{equation}
(D_{\hat{Z}} \phi^{-1})_{j,\pi(j)} \neq 0.
\end{equation}
Because $\operatorname{supp}(M_\phi) = \operatorname{supp}( D_{\hat{Z}} \phi^{-1})$, this implies
\begin{equation}
    {M_\phi}_{j,\pi(j)}\neq 0, \forall j \in \operatorname{supp}((D_Z g)_{i,\cdot}).
\end{equation}
Further incorporating Eq. \eqref{eq:supp_belongs_s}, it follows that
\begin{equation} \label{eq:after_hall_s}
    \pi(j) \in \operatorname{span}\{ e_k : k \in \operatorname{supp}((D_{\hat{Z}} \hat{g})_{i,\cdot}) \}, \forall j \in \operatorname{supp}((D_Z g)_{i,\cdot}).
\end{equation}
Therefore, for any non-zero element in $D_z g$, there always exists a corresponding non-zero element in $D_{\hat{z}} \hat{g}$, with the relations on their indices as follows
\begin{equation} \label{eq:spar_imply_s}
    (D_z g)_{i,j} \neq 0 \implies (D_{\hat{z}} \hat{g})_{i,\pi(j)} \neq 0.
\end{equation}
Furthermore, because of the assumption that 
\begin{equation}
    \|D_{\hat{Z}} \hat{g} \|_0 \leq \|D_Z g \|_0,
\end{equation}
Eq. \eqref{eq:spar_imply_s} can be further restricted to an equivalence between the sparsity patterns, i.e.,
\begin{equation} \label{eq:spar_equiv_s}
    (D_z g)_{i,j} \neq 0 \Longleftrightarrow (D_{\hat{z}} \hat{g})_{i,\pi(j)} \neq 0.
\end{equation}
Therefore, there must be
\begin{equation}
    \operatorname{supp}(D_z g) = \operatorname{supp}((D_{\hat{z}} \hat{g}) P),
\end{equation}
where $P$ denotes a permutation matrix. Thus, the support of the Jacobian matrix $D_{\hat{z}} \hat{g}$ is identical to that of $D_z g$, up to a permutation of column indices.
\end{proof}

\subsection{Proof of Theorem \ref{thm:perm}}\label{appx:proof_thm_perm}

\perm*

\begin{proof}
Since all assumptions in Thm. \ref{thm:gen_iden} are satisfied, for these two models $\theta = (g,p_Z)$ and $\hat{\theta} = (\hat{g},p_{\hat{Z}})$ following the process in Sec. \ref{sec:prelim}, we can follow the same steps in Sec. \ref{appx:proof_thm_gen_iden} to derive Eq. \eqref{eq:spar_equiv}, i.e.,
\begin{equation} \label{eq:spar_equiv_p}
    (D_z g)_{i,j} \neq 0 \Longleftrightarrow (D_{\hat{z}} \hat{g})_{i,\pi(j)} \neq 0.
\end{equation}
Then, for any latent varible $Z_i \in Z$, let us consider all conditions in Assum. \ref{assum:structure}. We begin with the first condition: there exists a set of observed variables $A$ and an element \(X_k \in A\) such that
\begin{equation}
\bigcup_{X_j \in A} I_j = [d_z], \quad \text{and} \quad I_k \setminus \bigcup_{X_j \in A \setminus \{X_k\}} I_j = \{i\}.
\end{equation}
Our want to show that, for any other $r \neq i$, we have
\begin{equation}
    \frac{\partial Z_i}{\partial \hat{Z}_{\pi(r)}} = 0.
\end{equation}
We consider two cases:
\begin{itemize}
    \item $r \in (\bigcup_{X_j \in A \setminus \{X_k\}} I_j) \setminus I_k $;
    \item $r \in (\bigcup_{X_j \in A \setminus \{X_k\}} I_j) \cap I_k $.
\end{itemize}
Suppose $r \in (\bigcup_{X_j \in A \setminus \{X_k\}} I_j) \setminus I_k $. Let us denote $J_{A\setminus k}$ as the index set of $A \setminus \{X_k\}$. Since $I_k \setminus \bigcup_{X_j \in A \setminus \{X_k\}} I_j = \{i\}$, for any $v \in J_{A\setminus k}$, there must be
\begin{equation}
    i \notin \operatorname{supp}(D_z g)_{v,.},
\end{equation}
We then suppose for contradiction that
\begin{equation}
    {M_\phi}_{i,\cdot} \in \operatorname{span}\{ e_{l} : l \in \operatorname{supp}((D_{\hat{Z}} \hat{g})_{v,\cdot}) \}.    
\end{equation}
In the proof of Theorem \ref{thm:gen_iden}, we have proved that
\begin{equation}
   {M_\phi}_{i,\pi(i)}\neq 0.     
\end{equation}
Then we have
\begin{equation}
    \pi(i) \in {\operatorname{supp}( D_{\hat{Z}} \hat{g})}_{v,\cdot}.    
\end{equation}
According to Eq. \eqref{eq:spar_equiv_p}, this implies
\begin{equation}
    i \in {\operatorname{supp}( D_{Z} g)}_{v,\cdot}.
\end{equation}
This contradicts
\begin{equation}
    i \notin \operatorname{supp}( D_{Z} g)_{v,\cdot}.
\end{equation}
Thus, there must be
\begin{equation} \label{eq:t_notin_spanv_p}
    {M_\phi}_{i,\cdot} \notin \operatorname{span}\{ e_{l} : l \in \operatorname{supp}((D_{\hat{Z}} \hat{g})_{v,\cdot}) \}
\end{equation}
We further suppose by contradiction that
\begin{equation}
    {M_\phi}_{i,\pi(r)} \neq 0,
\end{equation}
for $r \in (\bigcup_{X_j \in A \setminus \{X_k\}} I_j) \setminus I_k$. Then, according to Eq. \eqref{eq:t_notin_spanv_p}, there must be
\begin{equation}
    \pi(r) \notin {\operatorname{supp}( D_{\hat{Z}} \hat{g})}_{v,\cdot},
\end{equation}
which implies
\begin{equation}
    r \notin {\operatorname{supp}( D_{Z} g)}_{v,\cdot}.
\end{equation}
This is, again, a contradiction to $r \in (\bigcup_{X_j \in A \setminus \{X_k\}} I_j) \setminus I_k$. As a result, there must be
\begin{equation}
    {M_\phi}_{i,\pi(r)} = 0.
\end{equation}
We then consider the other case, where we assume $r \in (\bigcup_{X_j \in A \setminus \{X_k\}} I_j) \cap I_k$. Then there exists $q \in J_{A\setminus k}$ s.t.
\begin{equation}
    i \in \operatorname{supp}(D_Z g)_{q,\cdot},
\end{equation}
which further implies
\begin{equation}\label{eq:intersec_q}
     {M_\phi}_{i,\cdot} \in \operatorname{span}\{ e_{q'} : q' \in \operatorname{supp}((D_{\hat{Z}} \hat{g})_{q,\cdot}) \}.
\end{equation}
Since we also have
\begin{equation}
    i \notin {\operatorname{supp}( D_{Z} g)}_{q,\cdot}.
\end{equation}
We suppose for contradiction that
\begin{equation}
    {M_\phi}_{i,\cdot} \in \operatorname{span}\{ e_{l} : l \in \operatorname{supp}((D_{\hat{Z}} \hat{g})_{q,\cdot}) \}.    
\end{equation}
Since there is
\begin{equation}
    {M_{\phi}}_{i, \pi(i)} \neq 0.
\end{equation}
It follows that
\begin{equation}
    \pi(i) \in {\operatorname{supp}( D_{\hat{Z}} \hat{g})}_{q,\cdot}.
\end{equation}
According to Eq. \eqref{eq:spar_equiv_p}, it implies
\begin{equation}
    i \in {\operatorname{supp}( D_{Z} g)}_{q,\cdot}.
\end{equation}
This contradicts the case that $i \notin {\operatorname{supp}( D_{Z} g)}_{q,\cdot}$, and thus there must be 
\begin{equation} \label{eq:t_notin_spanv_i_q}
    {M_\phi}_{i,\cdot} \notin \operatorname{span}\{ e_{l} : l \in \operatorname{supp}((D_{\hat{Z}} \hat{g})_{q,\cdot}) \}.
\end{equation}
For $r \in (\bigcup_{X_j \in A \setminus \{X_k\}} I_j) \cap I_k$, suppose
\begin{equation}
    {M_{\phi}}_{i, \pi(r)} \neq 0.
\end{equation}
Given Eqs. \eqref{eq:intersec_q} and \eqref{eq:t_notin_spanv_i_q}, we have
\begin{align}
    \pi(r) \in {\operatorname{supp}( D_{\hat{Z}} \hat{g})}_{i,\cdot}, \\
    \pi(r) \notin {\operatorname{supp}( D_{\hat{Z}} \hat{g})}_{q,\cdot}.
\end{align}
Because of Eq. \eqref{eq:spar_equiv_p}, these further imply
\begin{align}
    r \in {\operatorname{supp}( D_{Z} g)}_{i,\cdot}, \\
    r \notin {\operatorname{supp}( D_{Z} g)}_{q,\cdot}.
\end{align}
This leads to 
\begin{equation}
    r \in I_k \setminus \bigcup_{X_j \in A \setminus \{X_k\}} I_j,
\end{equation}    
which contradicts 
\begin{equation}
    r \in \left(\bigcup_{X_j \in A \setminus \{X_k\}} I_j \right) \cap I_k.
\end{equation}
Therefore, there must be
\begin{equation}
    {M_{\phi}}_{i, \pi(r)} = 0.
\end{equation}
Since $M_{\phi}$ is the support of $D_{\hat{Z}} \phi^{-1}$, this implies that, for any other $r \neq i$, we have
\begin{equation}
    \frac{\partial Z_i}{\partial \hat{Z}_{\pi(r)}} = 0.
\end{equation}
Because $\phi$ is invertible, each row of $D_{\hat{Z}} \phi^{-1}$ must at least have one non-zero element. Therefore, it follows that
\begin{equation}
    \frac{\partial Z_i}{\partial \hat{Z}_{\pi(i)}} = 0.    
\end{equation}
Thus, with the first condition in Assum. \ref{assum:structure}, we have identifiability up to element-wise indeterminacy.

Next, we consider the second condition in Assum. \ref{assum:structure}. By applying the case of \textit{Intersection} in Defn. \ref{def:set} for all pairs of observed variables in $X$, there is
\begin{equation}\label{eq:new_2}
    \frac{\partial Z_{\bigcap_{X_j \in A \setminus \{X_k\}} I_j}}{\partial \sigma(\hat{Z})_{\Delta_{X_j \in A \setminus \{X_k\}} I_{j}}} = 0,
\end{equation}
where $\sigma$ denotes the transformation for the permutation $pi$. Then for $\bigcup_{X_j \in A \setminus \{X_k\}} I_j$ and $I_k$, by the \textit{individual-centric disentanglement} in Prop. \ref{prop:set_full}, there is
\begin{equation}\label{eq:new_3}
    \frac{\partial Z_{\left(\bigcup_{X_j \in A \setminus \{X_k\}} I_j\right)\setminus I_k}}{\partial \sigma(\hat{Z})_{I_k}}=0.
\end{equation}
Note that
\begin{equation}\label{eq:new_1}
    \left( Z_{\bigcap_{X_j \in A \setminus \{X_k\}} I_j} \right) \cap \left( Z_{\left(\bigcup_{X_j \in A \setminus \{X_k\}} I_j\right) \setminus \{X_k\}} \right) = Z_{\left(\bigcap_{X_j \in A \setminus \{X_k\}} I_j\right)\setminus I_k}
\end{equation}
Considering both Eqs. \eqref{eq:new_2} and \eqref{eq:new_1}, we have
\begin{equation}
    \frac{\partial Z_{\left(\bigcap_{X_j \in A \setminus \{X_k\}} I_j\right)\setminus I_k}}{\partial \sigma(\hat{Z})_{\Delta_{X_j \in A \setminus \{X_k\}} I_{j}}} = 0.
\end{equation}
Considering both Eqs. \eqref{eq:new_3} and \eqref{eq:new_1}, we have
\begin{equation}
    \frac{\partial Z_{\left(\bigcap_{X_j \in A \setminus \{X_k\}} I_j\right)\setminus I_k}}{\partial \sigma(\hat{Z})_{I_k}} = 0.
\end{equation}
Note that
\begin{align}
    &\sigma(\hat{Z})_{\Delta_{X_j \in A \setminus \{X_k\}} I_{j}} \cup \sigma(\hat{Z})_{I_k}  \\ 
    = &[d_z]  \setminus \left( \left(\bigcap_{X_j \in A \setminus \{X_k\}} I_j\right) \setminus I_k \right) \\
    = & [d_z] \setminus {i}.
\end{align}
Further given the invertibility of $\phi$, each row of $D_{\hat{Z}} \phi^{-1}$ must at least have one non-zero element. Therefore, it follows that
\begin{equation}
    \frac{\partial Z_i}{\partial \hat{Z}_{\pi(i)}} \neq 0.    
\end{equation}
Lastly, we consider the third condition in Assum. \ref{assum:structure}. 
That part of proof directly follows from \citep{lachapelle2021disentanglement,zhengidentifiability}. Suppose for each row in $M_{\phi}$, there are more than one non-zero element. Then
\begin{equation}
    \exists j_1 \neq j_2, {M_{\phi}}_{j_1,\cdot} \cap {M_{\phi}}_{j_2,\cdot} \neq \emptyset.
\end{equation}
Then consider $j_3 \in [d_z]$ such that
\begin{equation} \label{eq:zheng_1}
    \pi(j_3) \in {M_{\phi}}_{j_1,\cdot} \cap {M_{\phi}}_{j_2,\cdot}.
\end{equation}
Since $j_1 \neq j_3$, it is either $j_3 \neq j_1$ or $j_3 \neq j_2$. Without loss of generality, we assume $j_3 \neq j_1$.

Since we have
\begin{equation}
    \bigcap_{X_j \in A_{j_1}} I_j = j_1,
\end{equation}
there must exists $X_{i_3} \in A_{j_1}$ such that $j_3 \neq I_{i_3}$. Because $j_1 \in I_{i_3}$, we have
\begin{equation}
    (i_3, j_1) \in \operatorname{supp}(D_Z g),
\end{equation}
which further implies
\begin{equation}
    {M_\phi}_{j_1,\cdot} \in \operatorname{span}\{ e_k' : k' \in \operatorname{supp}((D_{\hat{Z}} \hat{g})_{i_3,\cdot}) \}.
\end{equation}
Given Eq. \eqref{eq:zheng_1}, it implies
\begin{equation}
    \pi(j_3) \in \operatorname{supp}(D_{\hat{Z}} \hat{g})_{i_3, \cdot}.
\end{equation}
This, again, implies
\begin{equation}
    j_3 \in \operatorname{supp}(D_{Z} g)_{i_3, \cdot},
\end{equation}
which contradicts $j_3 \neq I_{i_3}$. Therefore,  for each row in $M_{\phi}$, there are no more than one non-zero element. Because $M_{\phi}$ is invertible. each row must at least have one non-zero element. Thus, there must be exactly one non-zero element each row, which is
\begin{equation}
    \frac{\partial Z_i}{\partial \hat{Z}_{\pi(i)}} \neq 0.    
\end{equation}
Thus, we have proved our goal with all three conditions.
\end{proof}

%% file: sections/appx_discuss.tex
\section{Additional Discussion}\label{appx:discuss}


\subsection{The Venn diagram.} 
Here we provide the full deriviation of the Venn diagram example.

\begin{figure}
    \centering
    \includegraphics[width=0.5\linewidth]{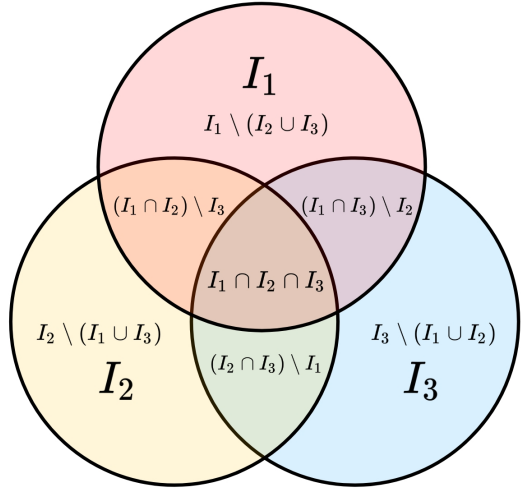}
    \caption{The Venn diagram example (Fig. \ref{fig:venn}).}
    \label{fig:enter-label}
\end{figure}

\begin{example}[Identifying all atomic regions]\label{ex:all_atomic}
Let $I_1$, $I_2$, and $I_3$ be the latent index sets of $X_1$, $X_2$, and $X_3$ in Fig.~\ref{fig:venn}.  
For each atomic region $\mathcal{A}$ we pick two sets of observed variables $(X_K,X_V)$ so that every $i\in\mathcal{A}$ satisfies one of the three conditions in Defn.~\ref{def:set} with every $j\notin\mathcal{A}$.  
This guarantees that the latents in $\mathcal{A}$ are disentangled from all others, establishing block-wise identifiability.

\paragraph{(i)  $I_1 \setminus (I_2 \cup I_3)$}  
\textit{Step 1:}  Take $X_K=X_1$, $X_V=X_2$.  
Then $i\in I_K\setminus I_V$ and every $j\in I_2$ lies in $I_V\setminus I_K$, so case (iii) applies.  
\textit{Step 2:}  Take $X_K=X_1$, $X_V=X_3$.  
Now $i\in I_K\setminus I_V$ and every $j\in I_3$ is in $I_V\setminus I_K$, again case (iii).  
All indices outside $\mathcal{A}$ belong to $I_2$ or $I_3$ (or both), so $\mathcal{A}$ is disentangled.

\paragraph{(ii) $I_2 \setminus (I_1 \cup I_3)$}  
Symmetric to (i) with the roles of $(1,2)$ and $(2,1)$ swapped:  
use $(X_K,X_V)=(X_2,X_1)$ and $(X_2,X_3)$.

\paragraph{(iii) $I_3 \setminus (I_1 \cup I_2)$}  
Symmetric to (i):  
use $(X_K,X_V)=(X_3,X_1)$ and $(X_3,X_2)$.

\paragraph{(iv) $(I_1 \cap I_2)\setminus I_3$}  
This is the worked example already given; we recap for completeness.  
\textit{Step 1:} $(X_1,X_2)$ yields $i\in I_K\cap I_V$, $j\in I_K\Delta I_V$ (case (ii)).  
\textit{Step 2:} $(X_1\cup X_2,X_3)$ yields $i\in I_K\setminus I_V$, $j\in I_V$ (case (iii)).  

\paragraph{(v) $(I_1 \cap I_3)\setminus I_2$}  
\textit{Step 1:}  $X_K=X_1$, $X_V=X_3$:  
$i\in I_K\cap I_V$, $j\in I_K\Delta I_V$ (case (ii)).  
\textit{Step 2:}  $X_K=X_1\cup X_3$, $X_V=X_2$:  
$i\in I_K\setminus I_V$, $j\in I_V$ (case (iii)).  

\paragraph{(vi) $(I_2 \cap I_3)\setminus I_1$}  
\textit{Step 1:}  $X_K=X_2$, $X_V=X_3$:  
$i\in I_K\cap I_V$, $j\in I_K\Delta I_V$ (case (ii)).  
\textit{Step 2:}  $X_K=X_2\cup X_3$, $X_V=X_1$:  
$i\in I_K\setminus I_V$, $j\in I_V$ (case (iii)).  

\paragraph{(vii) $I_1 \cap I_2 \cap I_3$}  
\textit{Step 1:}  $X_K=X_1$, $X_V=X_2$:  
$i\in I_K\cap I_V$, any $j$ that differs only by presence in $I_1$ or $I_2$ lies in $I_K\Delta I_V$ (case (ii)).  
\textit{Step 2:}  $X_K=X_1\cup X_2$, $X_V=X_3$:  
$i\in I_K\cap I_V$, while every remaining $j$ either  
(a) appears in exactly one of $I_1,I_2,I_3$ and so is in $I_K\Delta I_V$ (case (ii)), or  
(b) lies solely in $I_3$ and is in $I_V\setminus I_K$ (case (iii)).  

\medskip
In every case the chosen pairs cover all $j\notin\mathcal{A}$, so each atomic region is disentangled from the rest and hence block-wise identifiable under the invertibility assumption.
\end{example}

\subsection{Further discussion on the connection of conditions.} 

The spirit of many of our conditions stems from the classical literature of latent variable models. Here we discuss the connections in more details:

\textbf{Connection between diversity and sparsity assumptions.} The structural diversity condition is firstly introduced in our work for the nonlinear case, but other conditions on the structure appear in related field, e.g., the sparsity condition. For instance, the anchor feature assumption \citep{arora2012learning,moran2021identifiable}, structural sparsity \cite{zhengidentifiability}, sparse dictionary \citep{garfinkle2019uniqueness}, and many others. Although both diversity and sparsity concern latent structure, they are fundamentally different. Diversity focuses on variation in the dependency between latent and observed variables, and does not require sparsity at all. It remains valid even in nearly fully connected graphs, as long as there is some variation (e.g., even a single differing edge) in the connectivity patterns across variables. In contrast, sparsity conditions do not imply diversity and always enforce sparse connectivities.

\textbf{Connection between injectivity and RIP assumptions.} Injectivity is a standard requirement to ensure that latent information is not lost through the generative map. In the linear setting, this reduces to classical Restricted Isometry Property (RIP) conditions \citep{foucart2013restricted, jung2016minimax}, which are known to be necessary for linear dictionary learning. Our injectivity condition is the natural nonlinear analogue: it prevents degenerate mappings that collapse latent variation, standard in almost all previous work on nonparametric identifiability \citep{hyvarinen2024identifiability, moran2025towards}.

\textbf{Connection between dependency sparsity and latent sparsity regularizations.} Beyond assumptions on the data generating process, there are also connections in terms of regularization during estimation. Most prior work on sparse dictionary learning imposes sparsity directly on the recovered latent variables $Z$. The most prominent example is the Sparse Autoencoder (SAE), which is based on sparse dictionary learning and widely used in mechanistic interpretability. However, as highlighted by recent reviews \citep{cunningham2023sparse}, latent sparsity causes issues such as feature absorption and extremely high latent dimensionality (e.g., millions of variables). In contrast, we regularize sparsity on the dependency structure (Jacobian sparsity) rather than the latents themselves, which avoids these issues and has shown benefits both in our own experiments (Sec. 4.2) and in recent work on large language models, such as the Jacobian Sparse Autoencoder \citep{farnik2025jacobian}.

\subsection{Further discussion on the connection with SAEs.}

\textbf{Discussion.} Mechanistic interpretability often seeks to uncover the underlying concepts that drive the behavior of large language models, whether to understand sources of hallucination or to explain model responses. Sparse Autoencoders (SAEs) have been widely used for this purpose, but, as noted in the community [3], SAEs rely on sparse dictionary learning and therefore inherit its limitations:

\begin{itemize}
    \item First, SAEs fundamentally assume a linear generative function, since sparse dictionary learning lacks nonlinear identifiability. As you mentioned, the Linear Representation Hypothesis (LRH) is a reasonable working assumption in many contexts, but LRH concerns the linear relation within the latent space of $Z$ rather than the linearity of the generative map $g$ in $X=g(Z)$. In reality, most models have highly nonlinear generative functions (for example, due to nonlinear activations such as ReLU or GeLU). Assuming linearity at the generative level simplifies implementation but inevitably introduces bias and prevents full recovery of the true latent factors in these nonlinear settings.
    
    Different from the theoretical foundation of SAEs (i.e., sparse dictionary learning), our diverse dictionary learning provides theoretical guarantees for nonlinear latent variable models, offering a provably rigorous tool for mechanistic interpretability in almost all real-world scenarios, covering the previously unsupported nonlinear cases.
    \item Second, SAEs impose sparsity directly on the latent variables. This often forces extremely high-dimensional latent spaces (e.g., millions of units) to represent real-world concepts. More, latent sparsity can cause feature splitting and absorption, where important concepts are lost due to the encouragement of sparse latents. For instance, if we only maximize the latent sparsity, each feature will only corresponds to several samples, capture concepts that are very specific (e.g., persian cats) versus more general concepts (e.g., cats).

    In contrast, diverse dictionary learning encourages dependency sparsity (Jacobian sparsity) rather than latent sparsity, and is designed for nonlinear generative models with identifiability guarantees. This provides a principled solution to both limitations of SAEs and has meaningful implications for mechanistic interpretability.
\end{itemize}

\textbf{Empirical Evidence.} The recent Jacobian Sparse Autoencoder (JSAE) work \citep{farnik2025jacobian} provides extensive empirical support for the same dependency sparsity regularization required by our diverse dictionary learning theory. Their results suggest that replacing traditional SAE losses with dependency sparsity–based losses can improve both interpretability and efficiency. We therefore refer readers in mechanistic interpretability to their thorough empirical study.

\begin{wraptable}{r}{0.4\textwidth}
\centering
\vspace{-1em}
\caption{Number of dead features under different methods}
\vspace{-0.3em}
\label{tab:dead_features}
\begin{tabular}{l c}
\hline
Method & \# of Dead Features \\
\hline
Top-K SAE & 439 \\
Batch Top-K & 207 \\
JSAE & 62 \\
\hline
\end{tabular}
\vspace{-1em}
\end{wraptable}

At the same time, there is one perspective that \citep{farnik2025jacobian} has not evaluated against other SAE baselines, i.e., the number of dead features. To make the empirical evedeinces even more comprehensive, we conducted new experiments on the OpenWebtext with GPT2-Small, comparing JSAE with Top-K SAE \citep{gao2025scaling} and Batch Top-K SAE \citep{bussmann2024batchtopk}, with the latent dimension as $12,288$. The results are in Table \ref{tab:dead_features}.

These results, together with the comprehensive experiments in \citep{farnik2025jacobian}, further support the advantages of dependency sparsity, demonstrating that it not only yields more interpretable representations but also preserves active and meaningful latent features more effectively.

\subsection{Further discussion on sufficient nonlienarity}

The sufficient nonlinearity assumption is meant to ensure that the Jacobian varies enough across samples so that its Jacobian vectors span the relevant support. Although this condition may seem demanding at first glance, it is usually quite mild in practice. For each observed variable $X_i$, the requirement involves only $|(D_Z g)_{i,\cdot}|_0$ samples, which is simply the number of latent variables that influence $X_i$, and this number is typically far smaller than the available sample size.

When $g$ is smooth and the latent distribution has a continuous density, Jacobian evaluations at independently drawn samples form continuous random vectors. Such vectors are in general position with probability one, which means that a small number of random samples already produces Jacobian rows that span the required support. For example, if $X_i$ depends on five latent coordinates, then roughly five random samples are usually sufficient, even in much higher dimensional systems.

The second part $\operatorname{supp}((D_Z g(z^{(k)}) H)_{i,\cdot}) \subseteq \operatorname{supp}((D_{\hat{Z}}\hat{g})_{i,\cdot})$ avoids the cases where true dependencies being diminished across \textit{all} samples during estimation. We have $(D_{\hat{Z}} \hat{g} (\hat{z}))_{i,\cdot} = (D_Z g (z))_{i,\cdot} h(z, \hat{z})$, which already lies inside $\operatorname{supp}((D_{\hat{Z}}\hat{g})_{i,\cdot})$. The assumption requires the existence of several samples in the whole space where that relation holds for a matrix of the same support as $h$. The key nuance is that the condition only require that matrix exists for several samples of $z$ in the whole continuous space, but identifiability is defined as an asymptotic property over infinite samples. Conceptually, this condition implies that the estimation should have a denser Jacobian compared to the ground truth. It rules out the case where a true dependency is globally eliminated during estimation. In practice, we start with a neural network, which typically has a fully dense Jacobian when initialized, and then applying sparsity regularization to shave away redundant dependencies. A violation of this assumption would therefore correspond to a global cancellation of a genuine dependency across all samples, which is intuitively similar to a violation of faithfulness. As a result, alternative conditions may also be applied to achieve similar goals.


\subsection{Dependency sparsity regularization on large models}

Computing the full Jacobian of a large model can be expensive. Fortunately, recent work \citep{farnik2025jacobian} shows that dependency sparsity regularization remains practical even at scale up to common LLMs when combined with two standard strategies.

\begin{itemize}
    \item \textbf{Apply latent sparsity first.} Large models often have high dimensional latent spaces, but for any given input, many latent coordinates are inactive or irrelevant. A common approach is to first identify the active coordinates and compute the Jacobian only with respect to this subset. Since the active block is often tiny compared to the full latent space, this reduces both computation and memory by several orders of magnitude in typical transformer architectures [1]. In practice, the Jacobian is rarely formed as a dense $d_x \times d_z$ matrix, but only as a restricted slice that corresponds to the active latent directions for that specific input.
    
    \item \textbf{Use efficient closed-form expressions.} For several widely used architectures, the Jacobian with respect to selected latent directions has efficient factorizations. As shown in [1], models with residual attention and feedforward structure admit closed form expressions for the relevant Jacobian blocks that require only a few matrix multiplications and inexpensive elementwise operations. This avoids repeated backward passes through the full model and keeps the cost manageable even when the underlying network is large. 
\end{itemize}

With these strategies, training a large model with our dependency sparsity regularization is reported to be only about twice as slow as training with standard $\ell_1$ regularization on the latent variables $Z$ \citep{farnik2025jacobian}.

\subsection{General noise and non-invertibility}

In our paper, we follow the standard setup in the identifiability literature regarding noise and invertibility because violations of it usually require much stronger assumptions to compensate. That said, we feel that more discussion on how to handle these cases is helpful in certain scenarios.

\paragraph{Handling general noises.} Most identifiability results for latent variable models focus on additive independent noise or noiseless settings. This holds for both classical linear models \citep{reiersol1950identifiability} and recent nonparametric work \citep{hyvarinen2024identifiability}. The main difficulty is separating noise from latent variables, since in the general form they can be entangled in complex ways. Recent work \citep{zheng2025nonparametric}, based on the Hu-Schennach theorem \citep{hu2008instrumental}, shows that general noise can be separated under additional assumptions on the generative function $f$ and on conditional independence across groups of observed variables. As noted in Remark 1, under the same conditions, our results extend to settings with general noise.

\paragraph{Handling partial non-invertibility.} Invertibility and its variants are among the most common assumptions in the literature. Without additional assumptions, it can even be necessary, since information that is lost during generation cannot be recovered in principle. In scenarios where partial non-invertibility must be addressed, one may consider incorporating temporal information together with sufficient changes in the nonstationary transition \citep{chen2024caring}, which can provide the extra information needed for recovery. Since our theory focuses on the general setting without any auxiliary information, we adopt the standard assumption of invertibility.

\subsection{Further discussion on potential impact.} 

In a nutshell, recovering the ground-truth data generative process is important for both predictive and non-predictive tasks. Machine learning is fundamentally a balance between inductive bias and data. By uncovering the hidden world underlying the data, we obtain principled, domain-agnostic inductive biases grounded in fundamental understanding rather than heuristics. These insights can be embedded directly into model architectures, training objectives, and evaluation protocols, yielding systems that are more robust, generalizable, and data-efficient.  Importantly, these inductive biases are not merely theoretical; they have already produced measurable improvements across diverse real-world domains.

We first highlight two overarching benefits:

\paragraph{Truthfulness.} It is well known that, given any pair of observationally equivalent models (e.g., models perfectly trained by MLE), without additional assumptions, there is no guarantee at all that these models actually recover the underlying data generative processes. This has led to the critical issue of non-identifiability of the nonlinear latent variable model (e.g., the classical work on the non-uniqueness of nonlinear ICA \citep{hyvarinen1999nonlinear}), and has been widely validated by large-scale empirical studies on unsupervised learning (e.g., \citep{locatello2019challenging}). This raises significant challenges for tasks (e.g., transfer learning, controllable generation, compositional generalization, and mechanistic interpretability) where we need to understand the true process, rather than purely fitting the observed distribution. Therefore, understanding what aspects can still be recovered, and what inductive biases should be introduced to guide recovery, provide both theoretical guarantees and practical guidance on ensuring the truthfulness of the learning. These are exactly what our paper aims to address.

\paragraph{Efficiency.} Recovering the underlying true generative factors provides a principled solution to improve the efficiency, since we only need to model those essential representation capturing what we are interested, instead of the whole high-dimensional space that also include numerous irrelevant information or arbitrary noises. For instance, if we can make sure that the recovered latents will not be entangled with other latent variables, we can precisely only on those that relevant to our tasks (e.g., content for transfer learning, or specific latent concepts we need to modify) instead of the whole latent spaces, not only reducing the cost but enforcing the model to focus only on those relevant, improve the efficiency in a principled way.

We have conducted experiments on visual disentanglement to illustrate one of the potential application scenarios. Across datasets and backbone models, the results consistently show that adding the identifiability-guided regularization (dependency sparsity) enables the model to recover the underlying generative factors. This not only improves disentanglement performance, which is itself practically important, but also directly benefits the following application scenarios.

\paragraph{Mechanistic Interpretability.} In many scenarios, we aim to study the underlying concepts that generate the responses of LLMs, for many reasons such as investigating the root of hallucination or explaining the mechanisms for specific responses or behaviors. This is related to the field of Mechanistic Interpretability, where Sparse Autoencoder (SAE) has been popular. However, as known by the community \citep{sharkey2025open}, SAE is based on the theory of Sparse Dictionary Learning, and there are some open problems:

\begin{itemize}
    \item First, SAE can only deal with linear data due to the lack of nonlinear identifibality of sparse dictioanry learning. However, the real-world is full of nonlinearity. Assuming everything is linear simplicity the procedure and will unavoidably bring bias and cannot fully recover the truth.

    \item Moreover, SAE encourages sparsity over the latent variables. As a result, it needs an extremely high-dimensional vector to capture real-world concepts (e.g., millions of dimensions), and some features are easy to be absorbed due to its encourage on latent sparsity.
\end{itemize}
In contrast, our Diverse Dictionary Learning encourage the sparsity on the Jacobian (i.e., dependency sparsity) instead of latent diversity, and can handle nonlinear generative processes with identiability guarantees. This provides a principled solution to both open problems of SAE, which is highly significant for the whole filed of mechanistic interpretability. 

\paragraph{Transfer Learning.} In transfer learning, it is important to disentangle the invariant and changing part, such as disentangling invariant content from changing styles. Identifiability has been widely leveraged in the previous literature on guaranteeing reliable and efficient domain adaptation (e.g., \citep{von2021self, kong2022partial, li2023subspace}). Given two observed variables, our generalized identifiability results guarantees the recovery of the shared and private parts of their latent variables, under a simple regularization of dependency sparsity. This provides a flexible framework for transfer learning that can really capture the essential part across domains.

\paragraph{Controllable Generation.} A perfect prediction machine can only master at mining correlations, while leveraging the true causation relies on identifiability. For instance, if we want to add eyeglasses on a kid’s face, due to the overwhelming correlation between wearing eyeglasses and the relatively higher age, models sometimes will also increase the age of the kid, even for large foundational models. This negligence of the true underlying process is one of the fundamental reasons on why large models, although extensively trained on web-scale data, can still provide many extra changes that are out of our control. Previous work has already shown that, by incorporating inductive bias guided via identifiability, models can achieve precise control on the generated images \citep{xie2025learning, xie2023multi}.

\paragraph{Multi-modal Alignment.} Similarly, one may consider multiple modalities as multiple sets of observed variables, where the semantically meaningful concepts are those that shared by multiple modalities, and the modality-specific concepts (e.g., texture for images, volumns for audio) are those private latent variables. By formalizing the process as a general dictionary learning problem, many previous work have already shown the practical implication of identifiability, such as addressing information misalignment \citep{xie2025smartclip} and capturing complex interactions across different modalities \citep{sun2024causal}.

\paragraph{Scientific Discovery.} Understanding the hidden truth from observation is always one of the main tasks of scientific discovery. Newton observed the falling apple (observation $X$), then he got curious, studied much, and found out it was actually gravity (latent variables $Z$) causing items to fall. Therefore, the fundamental task of scientific discovery may be sumamrized into the simple equation of $X=f(Z)$, where we aim to recover $Z$ based solely on $X$, which is exactly our task of general dictionary learning. Of course, the guarantee that the recovered latent variables $\hat{Z}$ correspond to the ground-truth latents $Z$ in a meaningful way is what matters the most, and this is exactly the focus of our generalized identifiability theory. There have been numerous successful applications of identifiable representation leraning in the literature, such as dynamic systems like climate change \citep{yao2024marrying}, robotics \citep{lippe2023biscuit}, neuralimaging \citep{hyvarinen2016unsupervised} and genomics \citep{morioka2024causal}.

Of course, the list is non-exclusive, and will only grow faster given the development of large-scale models. The reason is simple: even with infinite data and computation, without identifiability, models may achieve perfect predictions but cannot be guaranteed to recover the underlying truth. Thus, the closer we are to that boundary, the more efforts we should put into going beyond correlation.

%% file: sections/appx_exp.tex
\section{Additional Experiments}\label{appx:exp}

In this section, we present further experiments on both synthetic and real-world data.

\subsection{Additional synthetic experiments}

We begin with a series of additional experiments in the synthetic setting.

\begin{table} 
\centering
\small
\renewcommand\arraystretch{0.9}
\setlength{\tabcolsep}{6pt}
\begin{tabular}{c|ccc}
\toprule
\textbf{Dim} & \textbf{Ours} & \textbf{OroJAR} & \textbf{Hessian Penalty} \\
\midrule
3 & 0.8258 $\pm$ 0.0085 & 0.7288 $\pm$ 0.0280 & 0.8257 $\pm$ 0.0240 \\
4 & 0.8449 $\pm$ 0.0043 & 0.6301 $\pm$ 0.0810 & 0.8352 $\pm$ 0.0396 \\
5 & 0.8048 $\pm$ 0.0080 & 0.5119 $\pm$ 0.1482 & 0.7789 $\pm$ 0.0174 \\
\bottomrule
\end{tabular}
\caption{MCC under different regularization penalties across dimensions ($\text{mean}\pm\text{std}$, higher is better).}
\label{tab:dim_results}
\end{table}


\paragraph{Additional baselines.}
We first compare dependency sparsity to alternative regularizers. Table~\ref{tab:dim_results} reports MCC for $d\in\{3,4,5\}$ against two Jacobian/Hessian penalties: OroJAR \citep{wei2021orthogonal} and the Hessian Penalty \citep{peebles2020hessian}. Neither provides identifiability guarantees in the nonparametric setting. Empirically, both underperform our method, with a widening gap as $d$ increases. This indicates that penalizing the dependency map in a structural way improves recovery of the true latent factors.

\begin{table} 
\centering
\small
\renewcommand\arraystretch{0.9}
\setlength{\tabcolsep}{6pt}
\begin{tabular}{c|ccc}
\toprule
\textbf{Dim} & \textbf{Ours w/o noise} & \textbf{Ours w/ noise} & \textbf{Base} \\
\midrule
3 & 0.8258 $\pm$ 0.0085 & 0.8210 $\pm$ 0.0088 & 0.3814 $\pm$ 0.0369 \\
4 & 0.8449 $\pm$ 0.0043 & 0.8381 $\pm$ 0.0093 & 0.5467 $\pm$ 0.0326 \\
5 & 0.8048 $\pm$ 0.0080 & 0.7944 $\pm$ 0.0134 & 0.4576 $\pm$ 0.1075 \\
\bottomrule
\end{tabular}
\caption{MCC across dimensions with and without noise (mean$\pm$std, higher is better).}
\label{tab:noise_results}
\end{table}

\paragraph{Noise robustness.}
Remark~\ref{remark:noise} states that our framework naturally extends to generative processes with additive noise. Table~\ref{tab:noise_results} confirms this: MCC remains essentially unchanged compared to the noiseless case, with only minor drops, while the base model degrades sharply. This supports the claim that dependency sparsity stabilizes latent recovery under noise. Extending identifiability to arbitrary noise remains more challenging in the nonparametric setting, since invertibility can break down and stronger assumptions are typically required.

\begin{table}
\centering
\small
\renewcommand\arraystretch{0.9}
\setlength{\tabcolsep}{6pt}
\begin{tabular}{c|ccc}
\toprule
\multirow{2}{*}{\centering$\lambda$} & \multicolumn{3}{c}{\textbf{Dimensionality}} \\
\cmidrule(lr){2-4}
& \textbf{3} & \textbf{4} & \textbf{5} \\
\midrule
0     & 0.6789 $\pm$ 0.0364 & 0.7317 $\pm$ 0.1092 & 0.6989 $\pm$ 0.0133 \\
0.001 & 0.7313 $\pm$ 0.0242 & 0.7294 $\pm$ 0.0147 & 0.7513 $\pm$ 0.0007 \\
0.005 & 0.7765 $\pm$ 0.0363 & 0.7826 $\pm$ 0.0244 & 0.7681 $\pm$ 0.0455 \\
0.01  & 0.8145 $\pm$ 0.0221 & 0.8032 $\pm$ 0.0327 & 0.7979 $\pm$ 0.0421 \\
0.03  & 0.8268 $\pm$ 0.0268 & 0.8232 $\pm$ 0.0187 & 0.8101 $\pm$ 0.0112 \\
0.05  & 0.8256 $\pm$ 0.0088 & 0.8420 $\pm$ 0.0401 & 0.8099 $\pm$ 0.0296 \\
\bottomrule
\end{tabular}
\caption{MCC across different $\lambda$ values (sparsity regularization weight) and dimensionalities (mean$\pm$std, higher is better).}
\label{tab:lambda_dim}
\end{table}

\paragraph{Regularization weight.}
To examine the effect of regularization strength, we vary the sparsity weight $\lambda$ in Table~\ref{tab:lambda_dim}. MCC increases steadily from $\lambda=0$ and plateaus around $\lambda\in[0.03,0.05]$, showing that the method is stable and not overly sensitive once past the under-regularized regime. Importantly, sparsity here serves only as an inductive bias during estimation. Our theory does not assume the data-generating process itself is sparse. Instead, it relies on structural diversity, which can hold even in dense settings. Moreover, the set-theoretic framework is robust to partial violations of assumptions and still enables meaningful recovery when full identifiability is unattainable.

\subsection{Additional visual experiments}

We next evaluate more on images, providing both quantitative comparisons and qualitative analyses.

\begin{table} 
\centering
\small
\renewcommand\arraystretch{0.9}
\setlength{\tabcolsep}{6pt}
\begin{tabular}{l|cc}
\toprule
\textbf{Method} &
\textbf{FactorVAE} $\uparrow$ &
\textbf{DCI} $\uparrow$ \\
\midrule
FactorVAE & 0.708 $\pm$ 0.026 & 0.135 $\pm$ 0.030 \\
+ Latent Sparsity & 0.501 $\pm$ 0.434 & 0.113 $\pm$ 0.069 \\
+ Dependency Sparsity & \textbf{0.752 $\pm$ 0.040} & \textbf{0.144 $\pm$ 0.053} \\
+ Dependency Sparsity (128) & 0.723 $\pm$ 0.023 & 0.141 $\pm$ 0.004 \\
\bottomrule
\end{tabular}
\caption{Quantitative comparison on Cars3D dataset (mean$\pm$std, higher is better).}
\label{tab:factorvae_dci}
\end{table}

\paragraph{Scalability.}
To assess scalability, we upsampled Cars3D to $128\times128$ and re-ran FactorVAE with dependency sparsity. As shown in Table~\ref{tab:factorvae_dci} (last row), performance remains consistent with the $64\times64$ setting. This suggests that the observed improvements stem from leveraging structural regularization rather than resolution, and that the method scales robustly with image size.

\begin{table} 
\centering
\small
\renewcommand\arraystretch{0.9}
\setlength{\tabcolsep}{6pt}
\begin{tabular}{l|cc|cc}
\toprule
\multirow{2}{*}{\centering\textbf{Method}} &
\multicolumn{2}{c|}{\textbf{Cars3D}} &
\multicolumn{2}{c}{\textbf{MPI3D}} \\
\cmidrule(lr){2-3}\cmidrule(lr){4-5}
& \textbf{FactorVAE} $\uparrow$ & \textbf{DCI} $\uparrow$
& \textbf{FactorVAE} $\uparrow$ & \textbf{DCI} $\uparrow$ \\
\midrule
FactorVAE & 0.708 $\pm$ 0.026 & 0.135 $\pm$ 0.030 & 0.599 $\pm$ 0.064 & 0.345 $\pm$ 0.047 \\
+ Latent Sparsity & 0.501 $\pm$ 0.434 & 0.113 $\pm$ 0.069 & 0.440 $\pm$ 0.065 & 0.325 $\pm$ 0.028 \\
+ OroJAR & 0.165 $\pm$ 0.235 & 0.030 $\pm$ 0.007 & 0.499 $\pm$ 0.090 & 0.272 $\pm$ 0.054 \\
+ Hessian Penalty & 0.321 $\pm$ 0.455 & 0.082 $\pm$ 0.077 & 0.506 $\pm$ 0.056 & 0.254 $\pm$ 0.067 \\
+ \textbf{Dependency Sparsity} & \textbf{0.752 $\pm$ 0.040} & \textbf{0.144 $\pm$ 0.053} & \textbf{0.639 $\pm$ 0.084} & \textbf{0.384 $\pm$ 0.031} \\
\bottomrule
\end{tabular}
\caption{Quantitative comparison on Cars3D and MPI3D datasets (mean$\pm$std, higher is better).}
\label{tab:cars3d_mpi3d_results}
\end{table}

\paragraph{Quantitative evaluation.}
Table~\ref{tab:factorvae_dci} evaluates FactorVAE score and DCI on Cars3D. Adding \emph{dependency} sparsity improves FactorVAE from $0.708$ to $0.752$ and DCI from $0.135$ to $0.144$. Latent sparsity often underperforms. Table~\ref{tab:cars3d_mpi3d_results} extends to MPI3D and includes OroJAR and the Hessian Penalty. Dependency sparsity gives the best results on both datasets, improving FactorVAE and DCI while maintaining backbone training stability.

\begin{figure} 
    \centering
    \includegraphics[width=0.7\linewidth]{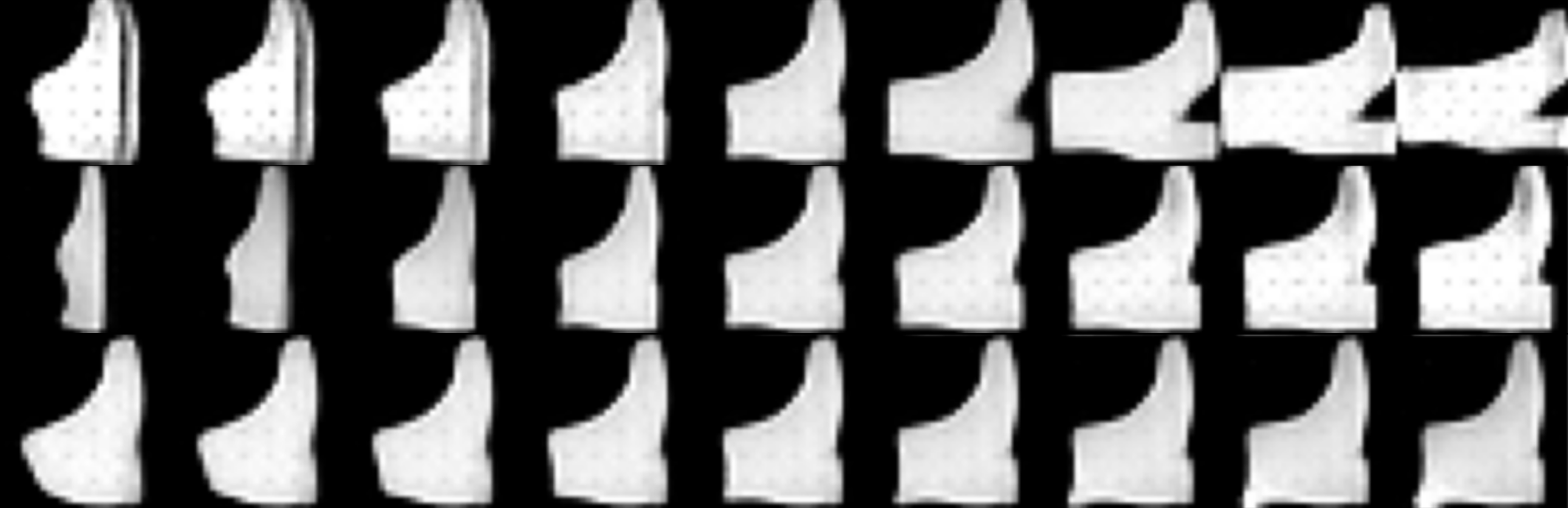}
    \caption{Latent variable visualization on Fashion with Flow + Dependency Sparsity. From top to bottom, the latent variables correspond to gender, heel height, and upper width.}
  \label{fig:fashion}
\end{figure}

\begin{figure} 
    \centering
    \includegraphics[width=0.7\linewidth]{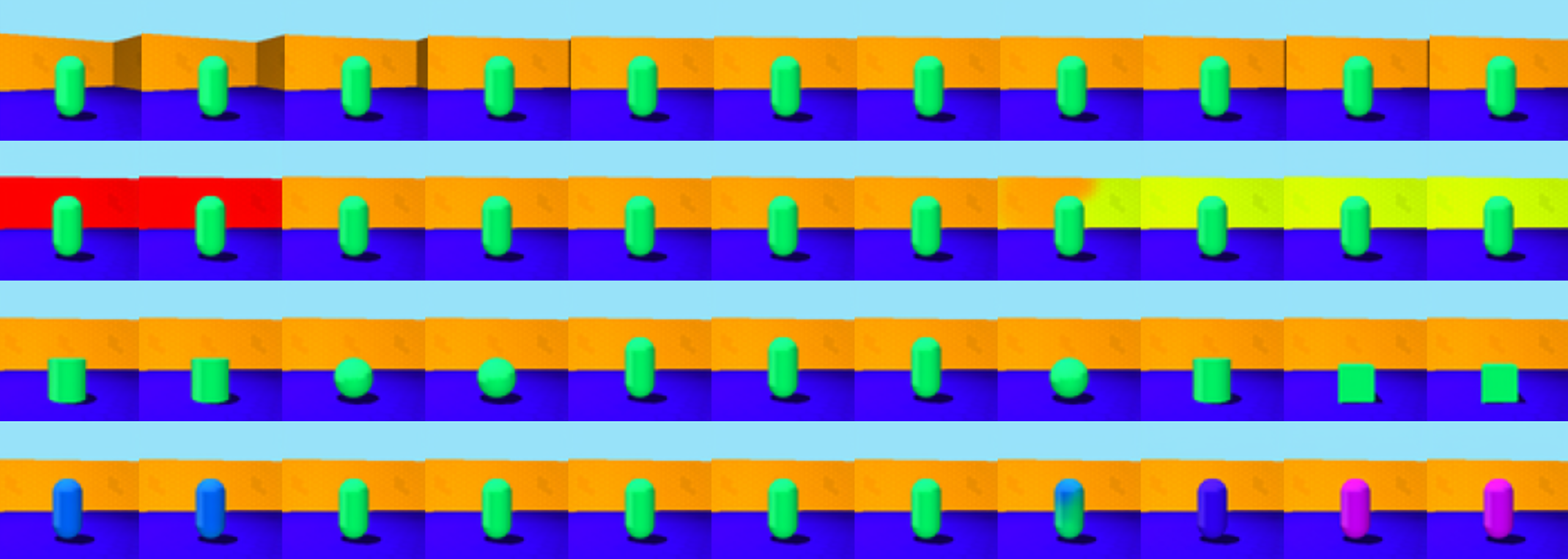}
  \caption{Latent variable visualization on Shapes3D with EncDiff + Dependency Sparsity. From top to bottom, the latent variables correspond to wall angle, wall color, object shape, and object color.}
  \label{fig:fashion}
\end{figure}

\begin{figure} 
    \centering
    \includegraphics[width=0.7\linewidth]{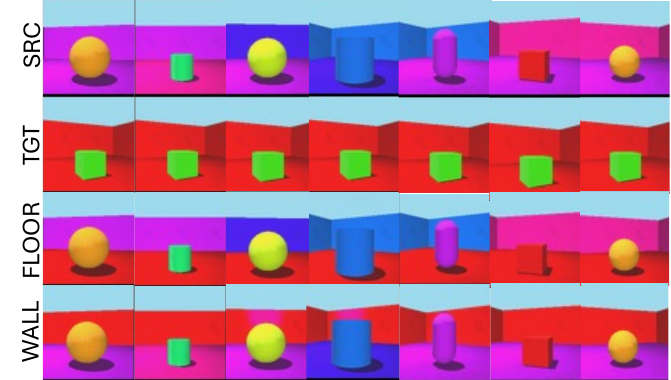}
  \caption{Latent variable visualization on Shapes3D with EncDiff + Dependency Sparsity. Top row: source. Second row: target. Each subsequent row modifies the source by swapping a single latent factor (floor color or wall color) from the target.}
  \label{fig:shapes3d}
\end{figure}

\begin{figure} 
    \centering
    \includegraphics[width=0.5\linewidth]{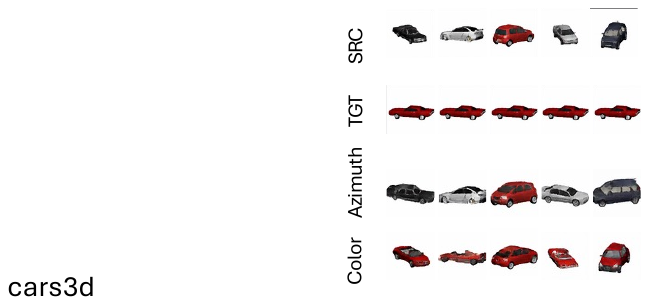}
    \caption{Latent variable visualization on Cars3D with EncDiff + Dependency Sparsity. Top row: source. Second row: target. Each subsequent row modifies the source by swapping a single latent factor (azimuth and color) from the target.}
    \label{fig:car3d}
\end{figure}

\begin{figure} 
    \centering
    \includegraphics[width=0.5\linewidth]{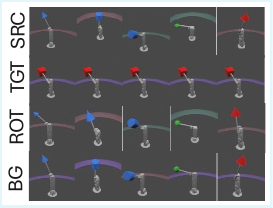}
    \caption{Latent variable visualization on MPI3D with EncDiff + Dependency Sparsity. Top row: source. Second row: target. Each subsequent row modifies the source by swapping a single latent factor (rotation and background) from the target.}
    \label{fig:mpi3d}
\end{figure}

\paragraph{Qualitative evaluation.}
A key goal of these experiments is to test whether dependency sparsity leads to more interpretable and disentangled latent representations in visual domains. Figure~\ref{fig:fashion} shows latent traversals on Fashion \citep{xiao2017fashion} with Flow. Individual latent coordinates correspond cleanly to gender, heel height, and upper width, with minimal interference across factors. The Shapes3D traversals in Figure~\ref{fig:fashion} (EncDiff) show similarly sharp control, disentangling wall angle, wall color, object shape, and object color. These traversals illustrate that dependency sparsity yields latent axes that align with semantic attributes and preserve orthogonality among factors.

Figures~\ref{fig:shapes3d}, \ref{fig:car3d}, and \ref{fig:mpi3d} further evaluate controllability via latent swapping. On Shapes3D, swapping a single factor cleanly transfers floor or wall color while leaving other factors intact. On Cars3D, EncDiff isolates azimuth and color. On MPI3D, rotation and background are controlled independently. These results highlight that dependency sparsity encourages \emph{localized} and \emph{non-overlapping} influences, enabling intuitive editing operations without unintended side effects. At the same time, the generative quality of the backbone (diffusion in this case) is preserved, with realistic outputs.

Together, these traversals and swaps reinforce the quantitative results: dependency sparsity not only improves disentanglement scores but also enhances interpretability and practical usability of the learned latents. By aligning latent dimensions with distinct semantic factors, it enables robust single-attribute manipulation and semantically meaningful latent arithmetic. These benefits are precisely what identifiability is meant to guarantee, providing further empirical validation of our theory.



%% file: example_paper.bib
@inproceedings{peebles2020hessian,
  title={The hessian penalty: A weak prior for unsupervised disentanglement},
  author={Peebles, William and Peebles, John and Zhu, Jun-Yan and Efros, Alexei and Torralba, Antonio},
  booktitle={European conference on computer vision},
  pages={581--597},
  year={2020},
  organization={Springer}
}

@inproceedings{bussmann2024batchtopk,
  title={BatchTopK Sparse Autoencoders},
  author={Bussmann, Bart and Leask, Patrick and Nanda, Neel},
  booktitle={NeurIPS 2024 Workshop on Scientific Methods for Understanding Deep Learning}
}

@inproceedings{chen2024caring,
  title={CaRiNG: Learning Temporal Causal Representation under Non-Invertible Generation Process},
  author={Chen, Guangyi and Shen, Yifan and Chen, Zhenhao and Song, Xiangchen and Sun, Yuewen and Yao, Weiran and Liu, Xiao and Zhang, Kun},
  booktitle={International Conference on Machine Learning},
  pages={7236--7259},
  year={2024},
  organization={PMLR}
}

@inproceedings{zheng2025nonparametric,
  title={Nonparametric Factor Analysis and Beyond},
  author={Zheng, Yujia and Liu, Yang and Yao, Jiaxiong and Hu, Yingyao and Zhang, Kun},
  booktitle={International Conference on Artificial Intelligence and Statistics},
  pages={424--432},
  year={2025},
  organization={PMLR}
}

@inproceedings{gao2025scaling,
  title={Scaling and evaluating sparse autoencoders},
  author={Gao, Leo and la Tour, Tom Dupre and Tillman, Henk and Goh, Gabriel and Troll, Rajan and Radford, Alec and Sutskever, Ilya and Leike, Jan and Wu, Jeffrey},
  booktitle={The Thirteenth International Conference on Learning Representations},
  year={2025},
}

@article{xiao2017fashion,
  title={Fashion-mnist: a novel image dataset for benchmarking machine learning algorithms},
  author={Xiao, Han and Rasul, Kashif and Vollgraf, Roland},
  journal={arXiv preprint arXiv:1708.07747},
  year={2017}
}

@inproceedings{wei2021orthogonal,
  title={Orthogonal jacobian regularization for unsupervised disentanglement in image generation},
  author={Wei, Yuxiang and Shi, Yupeng and Liu, Xiao and Ji, Zhilong and Gao, Yuan and Wu, Zhongqin and Zuo, Wangmeng},
  booktitle={Proceedings of the IEEE/CVF international conference on computer vision},
  pages={6721--6730},
  year={2021}
}

@article{geadah2024sparse,
  title={Sparse-Coding Variational Autoencoders},
  author={Geadah, Victor and Barello, Gabriel and Greenidge, Daniel and Charles, Adam S and Pillow, Jonathan W},
  journal={Neural computation},
  volume={36},
  number={12},
  pages={2571--2601},
  year={2024},
  publisher={MIT Press 255 Main Street, 9th Floor, Cambridge, Massachusetts 02142, USA~…}
}

@article{hu2023global,
  title={Global Identifiability of $\ell_1 $-based Dictionary Learning via Matrix Volume Optimization},
  author={Hu, Jingzhou and Huang, Kejun},
  journal={Advances in Neural Information Processing Systems},
  volume={36},
  pages={36165--36186},
  year={2023}
}

@inproceedings{farnik2025jacobian,
  title={Jacobian Sparse Autoencoders: Sparsify Computations, Not Just Activations},
  author={Farnik, Lucy and Lawson, Tim and Houghton, Conor and Aitchison, Laurence},
  booktitle={Forty-second International Conference on Machine Learning},
  year={2025}
}

@article{olshausen1997sparse,
  title={Sparse coding with an overcomplete basis set: A strategy employed by V1?},
  author={Olshausen, Bruno A and Field, David J},
  journal={Vision research},
  volume={37},
  number={23},
  pages={3311--3325},
  year={1997},
  publisher={Elsevier}
}

@inproceedings{arora2012learning,
  title={Learning topic models--going beyond SVD},
  author={Arora, Sanjeev and Ge, Rong and Moitra, Ankur},
  booktitle={2012 IEEE 53rd annual symposium on foundations of computer science},
  pages={1--10},
  year={2012},
  organization={IEEE}
}

@article{garfinkle2019uniqueness,
  title={On the uniqueness and stability of dictionaries for sparse representation of noisy signals},
  author={Garfinkle, Charles J and Hillar, Christopher J},
  journal={IEEE Transactions on Signal Processing},
  volume={67},
  number={23},
  pages={5884--5892},
  year={2019},
  publisher={IEEE}
}

@article{aharon2006k,
  title={K-SVD: An algorithm for designing overcomplete dictionaries for sparse representation},
  author={Aharon, Michal and Elad, Michael and Bruckstein, Alfred},
  journal={IEEE Transactions on signal processing},
  volume={54},
  number={11},
  pages={4311--4322},
  year={2006},
  publisher={IEEE}
}

@inproceedings{xie2025learning,
  title={Learning Vision and Language Concepts for Controllable Image Generation},
  author={Xie, Shaoan and Kong, Lingjing and Zheng, Yujia and Tang, Zeyu and Xing, Eric P and Chen, Guangyi and Zhang, Kun},
  booktitle={Forty-second International Conference on Machine Learning},
  year={2025}
}

@inproceedings{sun2024causal,
  title={Causal Representation Learning from Multimodal Biomedical Observations},
  author={Sun, Yuewen and Kong, Lingjing and Chen, Guangyi and Li, Loka and Luo, Gongxu and Li, Zijian and Zhang, Yixuan and Zheng, Yujia and Yang, Mengyue and Stojanov, Petar and others},
  booktitle={The Thirteenth International Conference on Learning Representations},
  year={2024}
}

@inproceedings{morioka2024causal,
  title={Causal Representation Learning Made Identifiable by Grouping of Observational Variables},
  author={Morioka, Hiroshi and Hyvarinen, Aapo},
  booktitle={International Conference on Machine Learning},
  pages={36249--36293},
  year={2024},
  organization={PMLR}
}

@inproceedings{lippe2023biscuit,
  title={Biscuit: Causal representation learning from binary interactions},
  author={Lippe, Phillip and Magliacane, Sara and L{\"o}we, Sindy and Asano, Yuki M and Cohen, Taco and Gavves, Efstratios},
  booktitle={Uncertainty in Artificial Intelligence},
  pages={1263--1273},
  year={2023},
  organization={PMLR}
}

@article{yao2024marrying,
  title={Marrying causal representation learning with dynamical systems for science},
  author={Yao, Dingling and Muller, Caroline and Locatello, Francesco},
  journal={Advances in Neural Information Processing Systems},
  volume={37},
  pages={71705--71736},
  year={2024}
}

@inproceedings{xie2025smartclip,
  title={SmartCLIP: Modular Vision-language Alignment with Identification Guarantees},
  author={Xie, Shaoan and Lingjing, Lingjing and Zheng, Yujia and Yao, Yu and Tang, Zeyu and Xing, Eric P and Chen, Guangyi and Zhang, Kun},
  booktitle={Proceedings of the Computer Vision and Pattern Recognition Conference},
  pages={29780--29790},
  year={2025}
}

@inproceedings{xie2023multi,
  title={Multi-domain image generation and translation with identifiability guarantees},
  author={Xie, Shaoan and Kong, Lingjing and Gong, Mingming and Zhang, Kun},
  booktitle={The Eleventh international conference on learning representations},
  year={2023}
}

@incollection{foucart2013restricted,
  title={Restricted isometry property},
  author={Foucart, Simon and Rauhut, Holger},
  booktitle={A Mathematical Introduction to Compressive Sensing},
  pages={133--174},
  year={2013},
  publisher={Springer}
}

@article{jung2016minimax,
  title={On the minimax risk of dictionary learning},
  author={Jung, Alexander and Eldar, Yonina C and G{\"o}rtz, Norbert},
  journal={IEEE Transactions on Information Theory},
  volume={62},
  number={3},
  pages={1501--1515},
  year={2016},
  publisher={IEEE}
}

@inproceedings{zhang2024causal,
  title={Causal Representation Learning from Multiple Distributions: A General Setting},
  author={Zhang, Kun and Xie, Shaoan and Ng, Ignavier and Zheng, Yujia},
  booktitle={Forty-first International Conference on Machine Learning},
  year={2024}
}

@article{lachapelle2024additive,
  title={Additive decoders for latent variables identification and cartesian-product extrapolation},
  author={Lachapelle, S{\'e}bastien and Mahajan, Divyat and Mitliagkas, Ioannis and Lacoste-Julien, Simon},
  journal={Advances in Neural Information Processing Systems},
  volume={36},
  year={2024}
}

@article{hu2008instrumental,
  title={Instrumental variable treatment of nonclassical measurement error models},
  author={Hu, Yingyao and Schennach, Susanne M},
  journal={Econometrica},
  volume={76},
  number={1},
  pages={195--216},
  year={2008},
  publisher={Wiley Online Library}
}

@article{reiersol1950identifiability,
  title={Identifiability of a linear relation between variables which are subject to error},
  author={Reiers{\o}l, Olav},
  journal={Econometrica: Journal of the Econometric Society},
  pages={375--389},
  year={1950},
  publisher={JSTOR}
}

@article{kivva2022identifiability,
  title={Identifiability of deep generative models without auxiliary information},
  author={Kivva, Bohdan and Rajendran, Goutham and Ravikumar, Pradeep and Aragam, Bryon},
  journal={Advances in Neural Information Processing Systems},
  volume={35},
  pages={15687--15701},
  year={2022}
}

@article{buchholz2022function,
  title={Function Classes for Identifiable Nonlinear Independent Component Analysis},
  author={Buchholz, Simon and Besserve, Michel and Sch{\"o}lkopf, Bernhard},
  journal={arXiv preprint arXiv:2208.06406},
  year={2022}
}

@inproceedings{zhengidentifiability,
  title={On the Identifiability of Nonlinear {ICA}: Sparsity and Beyond},
  author={Zheng, Yujia and Ng, Ignavier and Zhang, Kun},
  booktitle={Advances in Neural Information Processing Systems},
  Year={2022}
}

@article{kong2023identification,
  title={Identification of nonlinear latent hierarchical models},
  author={Kong, Lingjing and Huang, Biwei and Xie, Feng and Xing, Eric and Chi, Yuejie and Zhang, Kun},
  journal={Advances in Neural Information Processing Systems},
  volume={36},
  pages={2010--2032},
  year={2023}
}

@article{yan2023counterfactual,
  title={Counterfactual generation with identifiability guarantees},
  author={Yan, Hanqi and Kong, Lingjing and Gui, Lin and Chi, Yuejie and Xing, Eric and He, Yulan and Zhang, Kun},
  journal={Advances in Neural Information Processing Systems},
  volume={36},
  pages={56256--56277},
  year={2023}
}

@inproceedings{kong2022partial,
  title={Partial identifiability for domain adaptation},
  author={Kong, Lingjing and Xie, Shaoan and Yao, Weiran and Zheng, Yujia and Chen, Guangyi and Stojanov, Petar and Akinwande, Victor and Zhang, Kun},
  booktitle={International Conference on Machine Learning},
  pages={11455--11472},
  year={2022},
  organization={PMLR}
}

@article{rhodes2021local,
  title={Local Disentanglement in Variational Auto-Encoders Using Jacobian $ L_1 $ Regularization},
  author={Rhodes, Travers and Lee, Daniel},
  journal={Advances in Neural Information Processing Systems},
  volume={34},
  pages={22708--22719},
  year={2021}
}

@article{moran2021identifiable,
  title={Identifiable variational autoencoders via sparse decoding},
  author={Moran, Gemma E and Sridhar, Dhanya and Wang, Yixin and Blei, David M},
  journal={arXiv preprint arXiv:2110.10804},
  year={2021}
}

@inproceedings{hyvarinen2019nonlinear,
  title={Nonlinear {ICA} using auxiliary variables and generalized contrastive learning},
  author={Hyv{\"a}rinen, Aapo and Sasaki, Hiroaki and Turner, Richard},
  booktitle={International Conference on Artificial Intelligence and Statistics},
  pages={859--868},
  year={2019},
  organization={PMLR}
}

@article{hyvarinen2016unsupervised,
  title={Unsupervised feature extraction by time-contrastive learning and nonlinear {{ICA}}},
  author={Hyv{\"a}rinen, Aapo and Morioka, Hiroshi},
  journal={Advances in Neural Information Processing Systems},
  volume={29},
  pages={3765--3773},
  year={2016}
}

@article{sorrenson2020disentanglement,
  title={Disentanglement by nonlinear {ICA} with general incompressible-flow networks ({GIN})},
  author={Sorrenson, Peter and Rother, Carsten and K{\"o}the, Ullrich},
  journal={arXiv preprint arXiv:2001.04872},
  year={2020}
}

@article{jin2023learning,
  title={Learning causal representations from general environments: Identifiability and intrinsic ambiguity},
  author={Jin, Jikai and Syrgkanis, Vasilis},
  journal={arXiv preprint arXiv:2311.12267},
  year={2023}
}

@inproceedings{
sun2025global,
title={Global Identifiability of Overcomplete Dictionary Learning via L1 and Volume Minimization},
author={Yuchen Sun and Kejun Huang},
booktitle={The Thirteenth International Conference on Learning Representations},
year={2025}
}

@article{zhang2013comparison,
  title={A comparison of three occam's razors for markovian causal models},
  author={Zhang, Jiji},
  journal={The British journal for the philosophy of science},
  year={2013},
  publisher={The University of Chicago Press}
}

@article{li2023subspace,
  title={Subspace identification for multi-source domain adaptation},
  author={Li, Zijian and Cai, Ruichu and Chen, Guangyi and Sun, Boyang and Hao, Zhifeng and Zhang, Kun},
  journal={Advances in Neural Information Processing Systems},
  volume={36},
  pages={34504--34518},
  year={2023}
}

@inproceedings{yao2024multi,
  title={MULTI-VIEW CAUSAL REPRESENTATION LEARNING WITH PARTIAL OBSERVABILITY},
  author={Yao, Dingling and Xu, Danru and Lachapelle, S{\'e}bastien and Magliacane, Sara and Taslakian, Perouz and Martius, Georg and von K{\"u}gelgen, Julius and Locatello, Francesco},
  booktitle={International Conference on Learning Representations},
  year={2024}
}

@article{moran2025towards,
  title={Towards Interpretable Deep Generative Models via Causal Representation Learning},
  author={Moran, Gemma E and Aragam, Bryon},
  journal={arXiv preprint arXiv:2504.11609},
  year={2025}
}

@article{granger1984aristotle,
  author    = {Edgar Herbert Granger},
  title     = {Aristotle on Genus and Differentia},
  journal   = {Journal of the History of Philosophy},
  volume    = {22},
  number    = {1},
  pages     = {1--23},
  year      = {1984},
  publisher = {Johns Hopkins University Press},
  doi       = {10.1353/hph.1984.0001}
}

@article{von2023nonparametric,
  title={Nonparametric identifiability of causal representations from unknown interventions},
  author={von K{\"u}gelgen, Julius and Besserve, Michel and Wendong, Liang and Gresele, Luigi and Keki{\'c}, Armin and Bareinboim, Elias and Blei, David and Sch{\"o}lkopf, Bernhard},
  journal={Advances in Neural Information Processing Systems},
  volume={36},
  pages={48603--48638},
  year={2023}
}

@article{jiang2023learning,
  title={Learning nonparametric latent causal graphs with unknown interventions},
  author={Jiang, Yibo and Aragam, Bryon},
  journal={Advances in Neural Information Processing Systems},
  volume={36},
  pages={60468--60513},
  year={2023}
}

@article{sharkey2025open,
  title={Open Problems in Mechanistic Interpretability},
  author={Sharkey, Lee and Chughtai, Bilal and Batson, Joshua and Lindsey, Jack and Wu, Jeff and Bushnaq, Lucius and Goldowsky-Dill, Nicholas and Heimersheim, Stefan and Ortega, Alejandro and Bloom, Joseph and others},
  journal={arXiv preprint arXiv:2501.16496},
  year={2025}
}

@article{brehmer2022weakly,
  title={Weakly supervised causal representation learning},
  author={Brehmer, Johann and De Haan, Pim and Lippe, Phillip and Cohen, Taco S},
  journal={Advances in Neural Information Processing Systems},
  volume={35},
  pages={38319--38331},
  year={2022}
}

@article{cunningham2023sparse,
  title={Sparse autoencoders find highly interpretable features in language models},
  author={Cunningham, Hoagy and Ewart, Aidan and Riggs, Logan and Huben, Robert and Sharkey, Lee},
  journal={arXiv preprint arXiv:2309.08600},
  year={2023}
}

@article{lachapelle2021disentanglement,
  title={Disentanglement via Mechanism Sparsity Regularization: A New Principle for Nonlinear {ICA}},
  author={Lachapelle, S{\'e}bastien and L{\'o}pez, Pau Rodr{\'\i}guez and Sharma, Yash and Everett, Katie and Priol, R{\'e}mi Le and Lacoste, Alexandre and Lacoste-Julien, Simon},
  journal={Conference on Causal Learning and Reasoning},
  year={2022}
}

@article{hyvarinen1999nonlinear,
  title={Nonlinear independent component analysis: Existence and uniqueness results},
  author={Hyv{\"a}rinen, Aapo and Pajunen, Petteri},
  journal={Neural networks},
  volume={12},
  number={3},
  pages={429--439},
  year={1999},
  publisher={Elsevier}
}

@inproceedings{locatello2019challenging,
  title={Challenging common assumptions in the unsupervised learning of disentangled representations},
  author={Locatello, Francesco and Bauer, Stefan and Lucic, Mario and Raetsch, Gunnar and Gelly, Sylvain and Sch{\"o}lkopf, Bernhard and Bachem, Olivier},
  booktitle={international conference on machine learning},
  pages={4114--4124},
  year={2019},
  organization={PMLR}
}

@article{taleb1999source,
  title={Source separation in post-nonlinear mixtures},
  author={Taleb, Anisse and Jutten, Christian},
  journal={IEEE Transactions on signal Processing},
  volume={47},
  number={10},
  pages={2807--2820},
  year={1999},
  publisher={IEEE}
}

@inproceedings{khemakhem2020variational,
  title={Variational autoencoders and nonlinear {ICA}: A unifying framework},
  author={Khemakhem, Ilyes and Kingma, Diederik and Monti, Ricardo and Hyv{\"a}rinen, Aapo},
  booktitle={International Conference on Artificial Intelligence and Statistics},
  pages={2207--2217},
  year={2020},
  organization={PMLR}
}

@article{von2021self,
  title={Self-Supervised Learning with Data Augmentations Provably Isolates Content from Style},
  author={von K{\"u}gelgen, Julius and Sharma, Yash and Gresele, Luigi and Brendel, Wieland and Sch{\"o}lkopf, Bernhard and Besserve, Michel and Locatello, Francesco},
  journal={arXiv preprint arXiv:2106.04619},
  year={2021}
}

@article{yao2021learning,
  title={Learning Temporally Causal Latent Processes from General Temporal Data},
  author={Yao, Weiran and Sun, Yuewen and Ho, Alex and Sun, Changyin and Zhang, Kun},
  journal={arXiv preprint arXiv:2110.05428},
  year={2021}
}

@article{hyvarinen2024identifiability,
  title={Identifiability of latent-variable and structural-equation models: from linear to nonlinear},
  author={Hyv{\"a}rinen, Aapo and Khemakhem, Ilyes and Monti, Ricardo},
  journal={Annals of the Institute of Statistical Mathematics},
  volume={76},
  number={1},
  pages={1--33},
  year={2024},
  publisher={Springer}
}

@article{halva2021disentangling,
  title={Disentangling identifiable features from noisy data with structured nonlinear {ICA}},
  author={H{\"a}lv{\"a}, Hermanni and Le Corff, Sylvain and Leh{\'e}ricy, Luc and So, Jonathan and Zhu, Yongjie and Gassiat, Elisabeth and Hyv{\"a}rinen, Aapo},
  journal={Advances in Neural Information Processing Systems},
  volume={34},
  year={2021}
}

@inproceedings{yang2024diffusion,
  title={Diffusion model with cross attention as an inductive bias for disentanglement},
  author={Yang, Tao and Lan, Cuiling and Lu, Yan and Zheng, Nanning},
  booktitle={The Thirty-eighth Annual Conference on Neural Information Processing Systems},
  year={2024}
}

@article{ren2021learning,
  title={Learning disentangled representation by exploiting pretrained generative models: A contrastive learning view},
  author={Ren, Xuanchi and Yang, Tao and Wang, Yuwang and Zeng, Wenjun},
  journal={arXiv preprint arXiv:2102.10543},
  year={2021}
}

@inproceedings{kim2018disentangling,
  title={Disentangling by factorising},
  author={Kim, Hyunjik and Mnih, Andriy},
  booktitle={International conference on machine learning},
  pages={2649--2658},
  year={2018},
  organization={PMLR}
}

@article{reed2015deep,
  title={Deep visual analogy-making},
  author={Reed, Scott E and Zhang, Yi and Zhang, Yuting and Lee, Honglak},
  journal={Advances in neural information processing systems},
  volume={28},
  year={2015}
}

@inproceedings{eastwood2018framework,
  title={A framework for the quantitative evaluation of disentangled representations},
  author={Eastwood, Cian and Williams, Christopher KI},
  booktitle={6th International Conference on Learning Representations},
  year={2018}
}

@article{gondal2019transfer,
  title={On the transfer of inductive bias from simulation to the real world: a new disentanglement dataset},
  author={Gondal, Muhammad Waleed and Wuthrich, Manuel and Miladinovic, Djordje and Locatello, Francesco and Breidt, Martin and Volchkov, Valentin and Akpo, Joel and Bachem, Olivier and Sch{\"o}lkopf, Bernhard and Bauer, Stefan},
  journal={Advances in Neural Information Processing Systems},
  volume={32},
  year={2019}
}
